\theoremstyle{definition}
\newtheorem{prop}{Proposition}
\newtheorem{lem}{Lemma}
\newtheoremstyle{TheoremNum}
    {\topsep}{\topsep}              
    {}                      
    {}                              
    {\bfseries}                     
    {.}                             
    { }                             
    {\thmname{#1}\thmnote{ \bfseries #3}}
\theoremstyle{TheoremNum}
\newtheorem{lemn}{Lemma}
\newtheorem{propn}{Proposition}
\long\def\comment#1{}
\newcommand{\Tp}{\ensuremath{\mathcal{T}}}
\DeclareMathOperator*{\card}{card}
\DeclareMathOperator*{\vmf}{vMF}
\DeclareMathOperator*{\conv}{Conv}
\def\P{\mathbb P}
\title{Scalable inference of topic evolution via models for latent geometric structures}
\author{%
  Mikhail Yurochkin \\
  IBM Research \\
  \texttt{mikhail.yurochkin@ibm.com} \\
  \And
  Zhiwei Fan \\
  University of Wisconsin-Madison  \\
  \texttt{zhiwei@cs.wisc.edu} \\
  \And
  Aritra Guha \\
  University of Michigan \\
  \texttt{aritra@umich.edu} \\
  \AND
  Paraschos Koutris \\
  University of Wisconsin-Madison \\
  \texttt{paris@cs.wisc.edu} \\
  \And
  XuanLong Nguyen \\
  University of Michigan \\
  \texttt{xuanlong@umich.edu}
}
\begin{document}
\maketitle

\begin{abstract}
We develop new models and algorithms for learning the temporal dynamics of the topic polytopes and related geometric objects that arise in topic model based inference. Our model is nonparametric Bayesian and the corresponding inference algorithm is able to discover new topics as the time progresses. By exploiting the connection between the modeling of topic polytope evolution, Beta-Bernoulli process and the Hungarian matching algorithm, our method is shown to be several orders of magnitude faster than existing topic modeling approaches, as demonstrated by experiments working with several million documents in under two dozens of minutes.\footnote{Code: https://github.com/moonfolk/SDDM}
\end{abstract}

\section{Introduction}
\label{sec:intro}

The topic or population polytope is a fundamental geometric object that underlies the presence of latent topic variables in topic and admixture models ~\citep{blei2003latent,pritchard2000inference,tang2014understanding}.
The geometry of topic models provides the theoretical basis for posterior contraction analysis of latent topics, in addition to helping to develop fast and quite accurate inference algorithms in parametric and nonparametric settings~\citep{nguyen2015posterior,yurochkin2016geometric,yurochkin2017conic,yurochkin2019dirichlet}.  
When data and the associated topics are indexed by time dimension, it is of interest to study the temporal dynamics of such latent geometric structures. In this paper, we will study the modeling and algorithms for learning temporal dynamics of topic polytope that arises in the analysis of text corpora.

Several authors have extended the basic topic modeling framework to analyze how topics evolve over time. The Dynamic Topic Models (DTM) \citep{blei2006dynamic} demonstrated the importance of accounting for non-exchangeability between document groups, particularly when time index is provided. Another approach is to keep topics
fixed and consider only evolving topic popularity \citep{wang2006topics}. \citet{hong2011time} extended such an approach to multiple corpora. 
\citet{ahmed2012timeline} proposed a nonparametric construction extending DTM where topics can appear or eventually die out. Although the evolution of the latent geometric structure (i.e., the topic polytope) is implicitly present in these works, it was not explicitly addressed nor is the geometry exploited. A related limitation shared by these modeling frameworks is the lack of scalability, due to inefficient joint modeling and learning of topics at each time point and topic evolution over time. To improve scalability, a natural solution is decoupling the two phases of inference.

To this end, we seek to develop a series of topic \emph{meta}-models, i.e. models for temporal dynamics of topic polytopes, assuming that the topic estimates from each time point have already been obtained via some efficient static topic inference technique. 
The focus on inference of topic evolution offers novel opportunities and challenges. To start, what is the suitable ambient space in which the topic polytope is represented? As topics evolve, so are the number of topics that may become active or dormant, raising distinct modeling choices. Interesting issues arise in the inference, too. For instance, what is the principled way of \emph{matching} vertices of a collection of polytopes to their next reincarnations? Such question arises because we consider modeling of topics learned independently across timestamps and text corpora, which entails the need for preserving the topic structure's permutation invariance of the vertex labels. 

We consider an isometric embedding of the unit sphere in the word simplex, so that the evolution of topic polytopes may be represented by a collection of (random) trajectories of points residing on the unit sphere. Instead of attempting to mix-match vertices in an ad hoc fashion, we appeal to a Bayesian nonparametric modeling framework that allows the number of topic vertices to be random and vary across time. The mix-matching between topics shall be guided by the assumption on the smoothness of the collection of global trajectories on the sphere using von Mises-Fisher dynamics \citep{mardia2009directional}. The selection of active topics at each time point will be enabled by a nonparametric prior on the random binary matrices via the (hierarchical) Beta-Bernoulli process~\citep{thibaux2007hierarchical}.

Our contribution includes a sequence of Bayesian nonparametric models in increasing levels of complexity: the simpler model describes a topic polytope evolving over time, while the full model describes the temporal dynamics of a collection of topic polytopes as they arise from multiple corpora. The semantics of topics can be summarized as follows: there is a collection of latent global topics of unknown cardinality evolving over time (e.g. topics in science or social topics in Twitter). Each year (or day) a subset of the global topics is elucidated by the community (some topics may be dormant at a given time point). The nature of each global topic may change smoothly (via varying word frequencies). Additionally, different subsets of global topics are associated with different groups (e.g. journals or Twitter location stamps), some becoming active/inactive over time.

Another key contribution includes a suite of scalable approximate inference algorithms suitable for online and distributed settings. In particular, we focus mainly on MAP updates rather than a full Bayesian integration. This is appropriate in an online learning setting, moreover such updates of the latent topic polytope can be viewed as solving an optimal matching problem for which a fast Hungarian matching algorithm can be applied. Our approach is able to perform dynamic nonparametric topic inference on 3 million documents in 20 minutes, which is significantly faster than prior static online and/or distributed topic modeling algorithms \citep{newman2008distributed,hoffman2010online,wang2011online,bryant2012truly,broderick2013streaming}.

The remainder of the paper is organized as follows. In Section \ref{sec:sphere_map} we define a Markov process over the space of topic polytopes (simplices). In Section \ref{sec:models} we present a series of models for polytope dynamics and describe our algorithms for online dynamic and/or distributed inference. Section \ref{sec:experiments} demonstrates experimental results. We conclude with a discussion in Section \ref{sec:discuss}.



\section{Temporal dynamics of a topic polytope}
\label{sec:sphere_map}


The fundamental object of inference in this work is the topic polytope arising in topic modeling which we shall now define~\citep{blei2003latent,nguyen2015posterior}. Given a vocabulary of $V$ words, a topic is defined as a probability distribution on the vocabulary. Thus a topic is taken to be a point in the vocabulary simplex, namely, $\Delta^{V-1}$, and a topic polytope for a corpus of documents is defined as a convex hull of topics associated with the documents. Geometrically, the topics correspond to the vertices (extreme points) of the (latent) topic polytope to be inferred from data.


In order to infer about the temporal dynamics of a topic polytope, one might consider the evolution of each topic variable, say $\theta^{(t)}$, which represents a vertex of the polytope at time $t$. A standard approach is due to~\citet{blei2006dynamic}, who proposed to use a Gaussian Markov chain $\theta^{(t)}|\theta^{(t-1)} \thicksim \mathcal{N}(\theta^{(t-1)},\sigma I)$ in $\mathbb{R}^V$ for modeling temporal dynamics and a logistic normal transformation $\pi(\theta^{(t)})_i := \frac{\exp(\theta^{(t)}_{i})}{\sum_i \exp(\theta^{(t)}_{i})}$, which sends elements
of $\mathbb{R}^V$ into $\Delta^{V-1}$. In our meta-modeling approach, we consider topics, i.e. points in $\Delta^{V-1}$, learned independently across time and corpora. Logistic normal map is many-to-one, hence it is undesirably ambiguous in mapping a collection of topic polytopes to $\mathbb{R}^V$.

We propose to represent each topic variable as a point in a unit sphere $\mathbb{S}^{V-2}$, which possesses a natural isometric embedding (i.e. one-to-one) in the vocabulary simplex $\Delta^{V-1}$, so that the temporal dynamics of a topic variable can be identified as a (random) trajectory on $\mathbb{S}^{V-2}$. This trajectory shall be modeled as a Markovian process on $\mathbb{S}^{V-2}$: $\theta^{(t)}|\theta^{(t-1)} \thicksim \vmf(\theta^{(t-1)}, \tau_0)$. Von Mises-Fisher (vMF) distribution is commonly used in the field of directional statistics \citep{mardia2009directional} to model points on a unit sphere and was previously utilized for text modeling \citep{banerjee2005clustering, reisinger2010spherical}.



\paragraph{Isometric embedding of $\mathbb{S}^{V-2}$ into the vocabulary simplex} We start with the directional representation of topic polytope~\citep{yurochkin2017conic}: let $B=\{\beta_1,\ldots,\beta_K\}$ be a collection of vertices of a topic polytope. Each vertex is represented as $\beta_k := C + R_k\tilde{\beta}_k$, where $C \in \conv(B)$ is a reference point in a convex hull of $B$, $\tilde{\beta}_k \in \mathbb{R}^V$ is a topic direction and $R_k \in \mathbb{R}_{+}$. Moreover, $R_k \in [0,1]$ is determined so that the tip of direction vector $\tilde{\beta}_k$ resides on the boundary of $\Delta^{V-1}$. Since the effective dimensionality of $\tilde{\beta}_k$ is $V-2$, we can now define an one-to-one and isometric map sending $\tilde{\beta}_k$ onto $\mathbb{S}^{V-2}$ as follows: map of the vocabulary simplex $\Delta^{V-1} \in \mathbb{R}^V$ where it is first translated so that $C$ becomes the origin and then rotated into $\mathbb{R}^{V-1}$, where resulting topics, say $\theta_1,\ldots,\theta_K \in \mathbb{S}^{V-2}$, are normalized to the unit length. Observe that this geometric map is an isometry and hence invertible. It preserves angles between vectors, therefore we can evaluate vMF density without performing the map explicitly, by simply setting $\theta_k := \frac{\beta_k - C}{\|\beta_k - C\|}$. The following lemma formalizes this idea.

\begin{lem}
\label{lem:simplex_to_sphere}
$\Gamma: \{\beta=(\beta_1,\ldots,\beta_V)\in \Delta^{V-1}: \beta_i=0 \text{ for some } i\} \rightarrow \{\theta \in \mathbb{S}^{V-1} : \mathbbm{1}_V^{T}\theta=0 \}$ is a homeomorphism, where $\Gamma(\beta)=\left(\beta-C\right)/\|\beta-C\|_2$, and $\Gamma^{-1}(\theta)=-\frac{\theta}{\max_i \theta_i/c_i}+C$, for any  $C=(c_1,\ldots,c_V) \in \Delta^{V-1}$.
\end{lem}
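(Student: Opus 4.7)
The statement is a concrete instance of the classical radial homeomorphism between the boundary of a bounded convex body and a concentric sphere: $\Delta^{V-1}$ sits in the affine hyperplane $\{x:\mathbbm{1}_V^T x=1\}$, translation by $-C$ sends this to the linear hyperplane $H=\{\theta:\mathbbm{1}_V^T\theta=0\}$, and within $H$ the radial projection from the origin onto the unit sphere is a homeomorphism with the translated boundary. Instantiating this picture in coordinates gives the formulas in the lemma. My plan is to (i) show $\Gamma$ is well defined and lands in the claimed codomain, (ii) derive $\Gamma^{-1}$ by identifying it with the exit point of a ray from $C$, and (iii) verify continuity of both maps. Throughout I would take $C$ to lie in the relative interior of $\Delta^{V-1}$ (all $c_i>0$); this is tacit in the paper since the inverse formula already requires it.

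For step (i), because $\beta,C\in\Delta^{V-1}$, one has $\mathbbm{1}_V^T(\beta-C)=1-1=0$, so $\Gamma(\beta)\in H$, and by construction $\|\Gamma(\beta)\|_2=1$. With $C$ in the interior, $\beta\ne C$ for every boundary $\beta$, so the denominator is strictly positive and $\Gamma$ is well defined.

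Step (ii) is the main content. Given $\theta\in H\cap\mathbb{S}^{V-1}$, the ray $\{C+r\theta:r\ge 0\}$ stays in $\Delta^{V-1}$ exactly as long as $c_i+r\theta_i\ge 0$ for every $i$; the binding constraints come from indices with $\theta_i<0$, and such indices exist because $\mathbbm{1}_V^T\theta=0$ with $\theta\ne 0$. The exit distance is therefore $r^\star=\min_{i:\theta_i<0}(-c_i/\theta_i)=-[\min_i(\theta_i/c_i)]^{-1}$, so the unique boundary point along the ray is $C+r^\star\theta = C-\theta/\min_i(\theta_i/c_i)$. I note that the paper writes $\max$ in place of $\min$ in the inverse formula; with the forward convention $\Gamma(\beta)=(\beta-C)/\|\beta-C\|_2$ as stated, the correct denominator is the (negative) minimum, so this appears to be a sign/typo and either reading defines the same homeomorphism. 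The identities $\Gamma^{-1}\circ\Gamma=\mathrm{id}$ and $\Gamma\circ\Gamma^{-1}=\mathrm{id}$ then follow, since along the ray through $C$ and $\beta$ normalization recovers the direction while $r^\star$ recovers the length.

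Step (iii) is routine. $\Gamma$ is continuous on its domain because $\|\beta-C\|_2$ is bounded below by $\mathrm{dist}(C,\partial\Delta^{V-1})>0$. For $\Gamma^{-1}$, the map $\theta\mapsto\min_i(\theta_i/c_i)$ is a minimum of finitely many continuous linear functions, hence continuous, and it is strictly negative on the codomain (some coordinate $\theta_i<0$), so its reciprocal is continuous and $\Gamma^{-1}$ is continuous. The main obstacle is nothing deep: correctly identifying the exit distance via the first-active simplex constraint and reconciling the sign convention in the inverse formula. The underlying fact is the standard radial-projection homeomorphism.
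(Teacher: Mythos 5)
Your proof is correct and follows essentially the same route as the paper's: exhibit the explicit radial inverse $\theta \mapsto C + r^\star\theta$ and verify the range, the two compositions, and continuity, with your ray--exit-distance derivation of $r^\star$ simply making the paper's choice of $\eta_\theta$ explicit (the paper also tacitly needs $C$ in the relative interior, as you note). Your sign observation is also right: with $\Gamma(\beta)=(\beta-C)/\|\beta-C\|_2$ the denominator in the inverse must be $\min_i \theta_i/c_i$, since the paper's $\eta_\theta=-1/\max_i\theta_i/c_i$ is negative and its normalization step actually returns $-\theta$, so the stated inverse is off by the antipodal map --- a typo that does not affect the homeomorphism conclusion.
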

Proofs of this Lemma is given in Supplement \ref{supp:lemma1}. The intuition behind the construction is provided via Figure \ref{fig:simplex_to_sphere} which gives a geometric illustration for $V=3$, vocabulary simplex $\Delta^{V-1}$ shown as red triangle. Two topics on the boundary (face) of the vocabulary simplex are $\beta_1 = C + \tilde{\beta}_1$ and $\beta_2 = C + \tilde{\beta}_2$. Green dot $C$ is the reference point and $\alpha = \angle (\tilde{\beta}_1, \tilde{\beta}_2)$. In Fig. \ref{fig:simplex_to_sphere} (left) we move $C$ by translation to the origin and rotate $\Delta^{V-1}$ from $xyz$ to $xy$ plane. In Fig. \ref{fig:simplex_to_sphere} (center left) we show the resulting image of $\Delta^{V-1}$ and add a unit sphere (blue) in $\mathbb{R}^2$. Corresponding to $\beta_1, \beta_2$ topics are the points $\theta_1, \theta_2$ on the sphere with $\angle(\theta_1,\theta_2) = \alpha$. Now, apply the inverse translation and rotation to \emph{both} $\Delta^{V-1}$ and 
$\mathbb{S}^{V-2}$, the result is shown in Fig. \ref{fig:simplex_to_sphere} (center right) --- we are back to $\mathbb{R}^3$ and $\angle(\theta_1,\theta_2)=\angle(\tilde{\beta}_1,\tilde{\beta}_2)=\alpha$, where $\theta_k = \frac{\beta_k - C}{\|\beta_k - C\|_2}$. In Fig. \ref{fig:simplex_to_sphere} (right) we give a geometric illustration of the temporal dynamics.

As described above, each topic evolves in a random trajectory residing in a unit sphere, so the evolution of a collection of topics can be modeled by a collection of corresponding trajectories on the sphere. Note that the number of "active" topics may be unknown and vary over time. Moreover, a topic may be activated, become dormant, and then resurface after some time. New modeling elements are introduced in the next section to account for these phenomena.

\begin{figure*}[t]
\begin{center}
\centerline{\includegraphics[width=\textwidth]{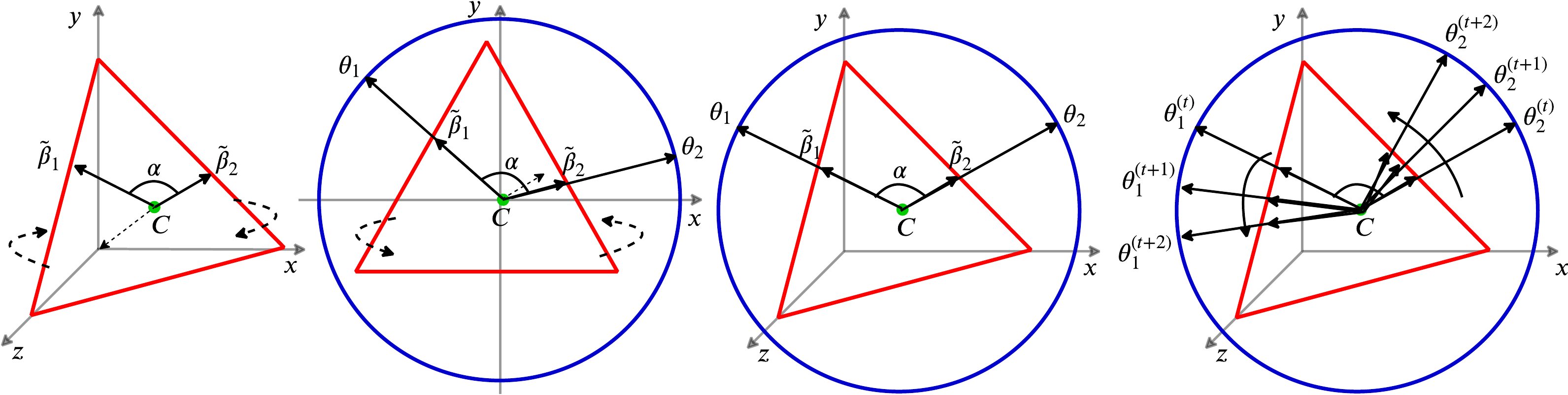}}
\caption{Invertible transformation between unit sphere and a standard simplex; dynamics example}
\vskip -0.4in
\label{fig:simplex_to_sphere}
\end{center}
\end{figure*}

\section{Hierarchical Bayesian modeling for single or multiple topic polytopes}
\label{sec:models}
We shall present a sequence of models with increasing levels of complexity: we start by introducing a hierarchical model for online learning of the temporal dynamics of a single topic polytope, allowing for varying number of vertices over time. Next, a static model for \emph{multiple} topic polytopes learned on different corpora drawing on a common pool of global topics. Finally, we present a "full" model for modeling evolution of global topic trajectories over time and across groups of corpora.

\subsection{Dynamic model for single topic polytope}
\label{subsec:dynamic}
At a high level, our model maintains a collection of global trajectories taking values on a unit sphere. Each trajectory shall be endowed with a von Mises-Fisher dynamic described in the previous section. At each time point, a random topic polytope is constructed by selecting a (random) subset of points on the trajectory evaluated at time $t$. The random selection is guided by a Beta-Bernoulli process prior \citep{thibaux2007hierarchical}. This construction is motivated by a modeling technique of \citet{nguyen2010inference}, who studied a Bayesian hierarchical model for inference of smooth trajectories on an Euclidean domain using Dirichlet process priors. Our generative model, using Beta-Bernoulli process as a building block, is more appropriate for the purpose of topic discovery. Due to the isometric embedding of $\mathbb{S}^{V-2}$ in $\Delta^{V-1}$ described in the previous section, from here on we shall refer to topics as points on $\mathbb{S}^{V-2}$.

First, generate a collection of global topic trajectories using Beta Process prior (cf. \citet{thibaux2007hierarchical})
\footnote{\citet{thibaux2007hierarchical} write BP$(c, H)$, $H(\Omega)=\gamma_0$; we set $c=1$, $H=H/\gamma_0$ and write BP$(\gamma_0, H)$.}
with a base measure $H$ on the space of trajectories on $\mathbb{S}^{V-2}$ and mass parameter $\gamma_0$:
\begin{equation}
\label{eq:top_bp}
Q|\gamma_0, H \thicksim \text{BP}(\gamma_0,H).
\end{equation}
It follows that $Q = \sum_i q_i \delta_{\theta_i}$,
where $\{q_i\}_{i=1}^\infty$ follows a stick-breaking construction \citep{teh2007stick}: $\mu_i \thicksim \text{Beta}(\gamma_0,1),\, q_i = \prod_{j=1}^i  \mu_j$, and each $\theta_i \thicksim H$ is a sequence of $T$ random elements on the unit sphere $\theta_i := \{\theta_i^{(t)}\}_{t=1}^T$, which are generated as follows:
\begin{equation}
\label{eq:vmf_time}
\begin{split}
 & \theta_i^{(t)}| \theta_i^{(t-1)} \thicksim  \vmf(\theta_i^{(t-1)}, \tau_0)\text{ for }t=1,\ldots, T,\\
 & \theta_i^{(0)} \thicksim  \vmf(\cdot,0) \; \text{-- uniform on} \; \mathbb{S}^{V-2}.
 \end{split}
\end{equation}
At any given time $t=1,\ldots,T$, the process $Q$ induces a marginal measure $Q_t$, whose support is given by the atoms of $Q$ as they are evaluated at time $t$. Now, select a subset of the global topics that are active at $t$ via the Bernoulli process
$\Tp^{(t)}|Q_t  \thicksim \text{BeP}(Q_t).$ Then
$\Tp^{(t)}  := {\textstyle\sum}_{i=1} b_{i}^{(t)} \delta_{\theta^{(t)}_i}\text{, where } b_{i}^{(t)}|q_{i} \thicksim \text{Bern}(q_{i}),\,\forall i$.
$\Tp^{(t)}$ are supported by atoms $\{\theta^{(t)}_i : b_{i}^{(t)} = 1, i=1,2,\ldots \}$ representing topics active at time $t$. Finally, assume that noisy measurements of each of these topic variables are generated via:
\begin{equation}
\label{eq:t_vmf_kalman}
\begin{split}
& v^{(t)}_{k}|\Tp^{(t)} \thicksim \vmf(\Tp_{k}^{(t)}, \tau_1),\, k = 1,\ldots,K^{(t)}\text{, where}\\
& \,K^{(t)} := \card(\Tp^{(t)}); \Tp_k^{(t)}\text{ is $k$-th atom of $\Tp^{(t)}$}.
\end{split}
\end{equation}
Noisy estimates for the topics at any particular time point may come from either the global topics observed until the previous time point or a topic yet unexplored. We emphasize that topics $\{v^{(t)}_{k}\}_{k=1}^{K^{(t)}}$ for $t=1,\ldots,T$ are the quantities we aim to model, hence we refer to our approach as the \emph{meta}-model. These topics may be learned, for each time point independently, by any stationary topic modeling algorithms, and then transformed to sphere by applying Lemma \ref{lem:simplex_to_sphere}.

Let $B^{(t)}$ denote the binary matrix representing the assignment of observed topic estimates to global topics at time point $t$, i.e, $B_{ik}^{(t)}=1$ if the vector $v_k^{(t)}$ is a noisy estimate for $\theta_i^{(t)}$. In words, these random variables ``link up'' the noisy estimates at any time point to the global topics observed thus far.
By conditional independence, the joint posterior of the hidden $\theta^{(t)}$ given observed noisy $v^{(t)}$ is:
\begin{equation*}
\label{eq:posterior_dynamic_BBP}
 \P\left( \theta^{(0)},\{\theta^{(t)},B^{(t)}\}_{t=1}^T | \{v^{(t)}\}_{t=1}^T \right) \propto 
 \P(\theta^{(0)})\prod_{t=1}^T  \P( \theta^{(t)}, B^{(t)} | \theta^{(t-1)},\{ B^{(a)}\}_{a=1}^{t-1})\P(v^{(t)}| \theta^{(t)}, B^{(t)}).
\end{equation*}
At $t$, $\P( \theta^{(t)}, B^{(t)} | \theta^{(t-1)}, \{ B^{(a)}\}_{a=1}^{t-1}) \P(v^{(t)}| \theta^{(t)}, B^{(t)}) \propto \P(\theta^{(t)},B^{(t)}| \theta^{(t-1)},v^{(t)},\{ B^{(a)}\}_{a=1}^{t-1}) \propto$
\begin{equation}
\label{eq:time_bp_1}
\begin{split}
 & \prod_{i=1}^{L_{t-1}} \left( \left(m_i^{(t-1)}/(t-m_i^{(t-1)})\right)^{\sum_{k=1}^ {K^{(t)}} B_{ik}^{(t)}} \exp(\tau_0 \langle \theta_i^{(t-1)}, \theta_i^{(t)}\rangle)\right) \\
 & \cdot \frac{\exp(-\frac{\gamma_0}{t}) (\gamma_0/t)^{L_t-L_{t-1}}}{(L_t-L_{t-1})!} \exp(\tau_1 {\textstyle\sum}_{i=1}^{L_t} {\textstyle\sum}_{k=1}^{K^{(t)}} B_{ik}^{(t)}\langle \theta_i^{(t)},v_k^{(t)}\rangle).
\end{split}
\end{equation}
The equation above represents a product of four quantities: (1) probability of $B^{(t)}$s, where 
$m_i^{(t)}$ denotes the number of occurrences of topic $i$ up to time point $t$ (cf. popularity of a dish in the Indian Buffet Process (IBP) metaphor \citep{ghahramani2005infinite}), 
(2) vMF conditional of $\theta_i^{(t)}$ given $\theta_i^{(t-1)}$ (cf. Eq.~\eqref{eq:vmf_time}), (3) number of new global topics at time $t$, $L_t - L_{t-1} \sim \textrm{Pois}(\gamma_0/t)$, and 
(4) emission probability $\P(v^{(t)} | \theta^{(t)}, B^{(t)})$ (cf. Eq.~\eqref{eq:t_vmf_kalman}). Derivation details are given in Supplement \ref{supp:sdm_posterior}.

\paragraph{Streaming Dynamic Matching (SDM)}

To perform MAP estimation in the streaming setting, we 
highlight the connection of the maximization of the posterior \eqref{eq:time_bp_1} to the objective of an optimal \emph{matching} problem: given a cost matrix, workers should be assigned to tasks, at most one worker per task and one task per worker. The solution of this problem is obtained by employing the well-known Hungarian algorithm \citep{kuhn1955hungarian}. In the context of dynamic topic modeling, our goal is to match topics learned on the new timestamp to the trajectories of topics learned over the previous timestamps, where the cost is governed by our model. This connection is formalized by the following.


\begin{prop}
\label{prop:sdm}
Given the cost
$
C_{ik}^{(t)}=
\begin{cases}  \| \tau_1 v_k^{(t)} +\tau_0 \theta_i^{(t-1)} \|_2 - \tau_0 + \log\frac{m_i^{(t-1)}}{t-m_i^{(t-1)}}, i\leq L_{t-1}\\
 \tau_1 + \log\frac{\gamma_0}{t} - \log(i-L_{t-1}), L_{t-1} < i \leq L_{t-1} + K^{(t)}
 \end{cases}
$
consider the optimization problem $\max_{B^{(t)}} \sum_{i,k}B_{ik}^{(t)}C_{ik}^{(t)}$ subject to the constraints that
(a) for each fixed $i$, at most one of $B_{ik}^{(t)}$ is $1$ and the rest are $0$, and 
(b) for each fixed $k$, exactly one of $B_{ik}^{(t)}$ is $1$ and the rest are $0$.
%
Then, the MAP estimate for Eq. \eqref{eq:time_bp_1} can be obtained by the Hungarian algorithm, which solves for $((B_{ik}^{(t)}))$ to obtain $\theta_i^{(t)}$ as 
\begin{equation}
\label{eq:sdm_theta}
\begin{cases} \frac{\tau_1 v_{k}^{(t)} +\tau_0 \theta_i^{(t-1)}}{\|\tau_1 v_{k}^{(t)} +\tau_0 \theta_i^{(t-1)}\|_2}, & \text{if }\exists \ k\  \text{s.t. }B_{ik}^{(t)}=1 \text{ and } i\leq L_{t-1}\\
v_k^{(t)}, & \text{if }\exists \ k\  \text{s.t. }B_{ik}^{(t)}=1 \text{ and } i>L_{t-1} \text{ (new topic)}\\
\theta_i^{(t-1)} & \text{otherwise (topic is dormant at $t$).}
\end{cases}
\end{equation}
\end{prop}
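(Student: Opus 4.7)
The plan is a two-step ``maximize-out-then-match'' reduction. First I take logs in \eqref{eq:time_bp_1}, absorbing into a constant every factor that depends on neither $\theta^{(t)}$ nor $B^{(t)}$ (notably the $\exp(-\gamma_0/t)$ factor and the $L_{t-1}\tau_0$ baseline introduced below). The resulting log-posterior decomposes into: (i) a linear-in-$B^{(t)}$ prior-mass piece $\sum_{i\le L_{t-1}}\bigl(\sum_k B_{ik}^{(t)}\bigr)\log\tfrac{m_i^{(t-1)}}{t-m_i^{(t-1)}}$; (ii) a vMF-smoothness piece $\tau_0\sum_{i\le L_{t-1}}\langle\theta_i^{(t-1)},\theta_i^{(t)}\rangle$; (iii) the Poisson piece $(L_t-L_{t-1})\log(\gamma_0/t)-\log((L_t-L_{t-1})!)$; and (iv) the emission piece $\tau_1\sum_{i,k}B_{ik}^{(t)}\langle\theta_i^{(t)},v_k^{(t)}\rangle$.

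Next, for any admissible $B^{(t)}$ I maximize analytically over each $\theta_i^{(t)}\in\mathbb{S}^{V-2}$. Using $\max_{\theta\in\mathbb{S}^{V-2}}\langle a,\theta\rangle=\|a\|_2$ with maximizer $a/\|a\|_2$, three cases arise. If $i\le L_{t-1}$ and some $B_{ik}^{(t)}=1$, the contribution is $\|\tau_1 v_k^{(t)}+\tau_0\theta_i^{(t-1)}\|_2$, attained at line~1 of \eqref{eq:sdm_theta}. If $i\le L_{t-1}$ is unmatched (topic dormant), the contribution is $\tau_0$, attained at $\theta_i^{(t)}=\theta_i^{(t-1)}$ (line~3). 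If $i>L_{t-1}$ is matched to some $k$, the contribution is $\tau_1$, attained at $\theta_i^{(t)}=v_k^{(t)}$ (line~2).

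Substituting these maxima back and subtracting the constant $L_{t-1}\tau_0$ corresponding to the all-dormant baseline, each matched existing-topic cell contributes exactly the first branch of $C_{ik}^{(t)}$. The one step that needs care is the Poisson factorial; I handle it by introducing $K^{(t)}$ ``potential newborn'' rows indexed $L_{t-1}+1,\dots,L_{t-1}+K^{(t)}$, each carrying cost $\tau_1+\log(\gamma_0/t)-\log(i-L_{t-1})$ when matched. Under constraints (a)--(b) exactly $L_t-L_{t-1}=\sum_{i>L_{t-1}}\sum_k B_{ik}^{(t)}$ newborn rows are used; since this cost is independent of $k$ and strictly decreasing in $i-L_{t-1}$, any optimal assignment fills the lowest available indices $1,2,\dots,L_t-L_{t-1}$ first, so that $\sum_{j=1}^{L_t-L_{t-1}}\log j=\log((L_t-L_{t-1})!)$ exactly reconstructs the missing factorial term.

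What remains is $\max_{B^{(t)}}\sum_{i,k}B_{ik}^{(t)}C_{ik}^{(t)}$ subject to (a)--(b), a standard rectangular linear assignment problem solved in polynomial time by Kuhn's Hungarian algorithm; plugging the returned $B^{(t)}$ back into the inner case analysis reproduces \eqref{eq:sdm_theta}. The main obstacle is the factorial reformulation via index-dependent newborn costs described above; the rest is routine bookkeeping.
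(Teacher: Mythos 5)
Your proposal is correct and follows essentially the same route as the paper's proof: take the log of \eqref{eq:time_bp_1}, profile out each $\theta_i^{(t)}$ in closed form via $\max_{\theta\in\mathbb{S}^{V-2}}\langle a,\theta\rangle=\|a\|_2$ (vMF conjugacy), measure each matched existing topic against the dormant baseline $\tau_0$, and fold the Poisson factorial into index-dependent newborn rows before invoking the Hungarian algorithm. Your explicit argument that optimal assignments fill the lowest newborn indices first (so $\sum_j \log j$ reconstructs $\log((L_t-L_{t-1})!)$) is the same telescoping the paper encodes implicitly by writing the per-row term as $\log\frac{(i-L_{t-1})!}{(i-L_{t-1}-1)!}$, just spelled out more carefully.
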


We defer proof to Supplement \ref{supp:sdm}. To complete description of the inference we shall discuss how noisy estimates are obtained from the bag-of-words representation of the documents observed at time point $t$. We choose to use CoSAC \citep{yurochkin2017conic} algorithm to obtain topics $\{\beta_k^{(t)} \in \Delta^{V-1}\}_{k=1}^{K^{(t)}}$ from $\{x_m^{(t)} \in \mathbb{N}^V\}_{m=1}^{M_t}$, collection of $M_t$ documents at time point $t$. CoSAC is a stationary topic modeling algorithm which can infer number of topics from the data and is computationally efficient for moderately sized corpora. We note that other topic modeling algorithms, e.g., variational inference \citep{blei2003latent} or Gibbs sampling \citep{griffiths2004finding, teh2006hierarchical}, can be used in place of CoSAC. Estimated topics are then transformed to $\{v^{(t)}_{k} \in \mathbb{S}^{V-2}\}_{k=1}^{K^{(t)}}$ using Lemma \ref{lem:simplex_to_sphere} and reference point $C_t = \sum_{a=1}^t\sum_{m=1}^{M_a} \frac{x_m^{(a)}}{N_m^{(a)}}/\sum_{a=1}^t M_a$,
where $N_m^{(a)}$ is the number of words in the corresponding document. Our reference point is simply an average (computed dynamically) of the normalized documents observed thus far.
Finally we update MAP estimates of global topics dynamics based on Proposition \ref{prop:sdm}. Streaming Dynamic Matching (SDM) is summarized in Algorithm \ref{algo:SDM}.

\paragraph{Additional related literature} utilizing similar technical building blocks in different contexts. \citet{fox2009sharing} utilized Beta-Bernoulli process in time series modeling to capture switching regimes of an autoregressive process, where the corresponding Indian Buffet Process was used to select subsets of the latent states of the Hidden Markov Model. \citet{williamson2010ibp} used Indian Buffet Process in topic models to sparsify document topic proportions. \citet{campbell2015streaming} utilized Hungarian algorithm for streaming mean-field variational inference of the Dirichlet Process mixture model.

\begin{algorithm}[ht]
\caption{Streaming Dynamic Matching (SDM)}
\label{algo:SDM}
\begin{algorithmic}[1]
\FOR{$t=1,\ldots,T$}
\STATE Observe documents $\{x_m^{(t)}\}_{m=1}^{M_t}$
\STATE Estimate topics $\{\beta_k^{(t)}\}_{k=1}^{K^{(t)}}$ = CoSAC($\{x_m^{(t)}\}_{m=1}^{M_t}$)
\STATE Map topics to sphere $\{v^{(t)}_{k}\}_{k=1}^{K^{(t)}}$ (Lemma \ref{lem:simplex_to_sphere})
\STATE Given $\{\theta^{(t-1)}\}_{i=1}^{L_{t-1}}$ and $\{v^{(t)}_{k}\}_{k=1}^{K^{(t)}}$ compute cost matrix as in Proposition \ref{prop:sdm}
\STATE Using Hungarian algorithm solve the corresponding matching problem to obtain $B^{(t)}$
\STATE Compute $\{\theta^{(t)}\}_{i=1}^{L_{t}}$ as in eq. \eqref{eq:sdm_theta}
\ENDFOR
\end{algorithmic}
\end{algorithm}


\subsection{Beta-Bernoulli Process for multiple topic polytopes}
\label{subsec:distributed}
We now consider meta-modeling in the presence of multiple corpora, each of which maintains its own topic polytope. Large text corpora often can be partitioned based on some grouping criteria, e.g. scientific papers by journals, news by different media agencies or tweets by location stamps. In this subsection we model the collection of topic polytopes observed at a single time point by employing the Beta-Bernoulli Process prior \citep{thibaux2007hierarchical}. The modeling of a collection of polytopes evolving over time will be described in the following subsection.

First, generate global topic measure $Q$ as in Eq. \eqref{eq:top_bp}. Here, we are interested only in a single time point, the base measure $H$ is simply a $\vmf(\cdot,0)$, the uniform distribution over $\mathbb{S}^{V-2}$. Next, for each group $j=1,\ldots,J$, select a subset of the global topics:
\begin{equation}
\label{tj_bern}
\Tp_j|Q  \thicksim \text{BeP}(Q)\text{, then }
\Tp_j  := {\textstyle\sum}_{i=1} b_{ji} \delta_{\theta_i}\text{, where }b_{ji}|q_{i} \thicksim \text{Bern}(q_{i}),\,\forall i.
\end{equation}

Notice that each group $\Tp_j := \{\theta_i : b_{ji} = 1, i=1,2,\ldots \}$ selects only a subset from the collection of global topics, which is consistent with the idea of partitioning by journals: some topics of ICML are not represented in SIGGRAPH and vice versa. The next step is analogous to Eq. \eqref{eq:t_vmf_kalman}:
\begin{equation}
\label{eq:j_vmf_kalman}
v_{jk}|\Tp_j \thicksim \vmf(\Tp_{jk}, \tau_1)\text{ for }k = 1,\ldots,K_j\text{, where }K_j := \card(\Tp_j).
\end{equation}
We again use $B$ to denote the binary matrix representing the assignment of global topics to the noisy topic estimates, i.e., $B_{jik}=1$ if the $k^{th}$ topic estimate for group $j$ arises as a noisy estimate of global topic $\theta_i$. 
However, the \emph{matching} problem is now different from before: we don't have any information about the global topics as there is no history, instead we should match a \emph{collection} of topic polytopes to a global topic polytope.
The matrix of topic assignments is distributed a priori by an Indian Buffet Process (IBP) with parameter $\gamma_0$. The conditional probability for global topics $\theta_i$ and assignment matrix $B$ given topic estimates $v_{jk}$ has the following form:
\begin{equation}
\label{eq:post_hbp_1}
 \P(B,\theta|v) \propto \exp(\tau_1 {\textstyle\sum}_{j,i,k} B_{jik} \langle\theta_i,v_{jk}\rangle)\text{IBP}(\{m_i\})\text{, where }m_i={\textstyle\sum}_{j,k} B_{jik}
\end{equation}
and IBP is the prior (see Eq. (15) in \citep{griffiths2011indian}) with $m_i$ denoting the popularity of global topic $i$.

\paragraph{Distributed Matching (DM)}
Similar to Section \ref{subsec:dynamic}, we look for point estimates for the topic directions $\theta$ and for the topic assignment matrix $B$. Direct computation of the global MAP estimate for Eq. \eqref{eq:post_hbp_1} is not straight-forward. The problem of matching across groups and topics is not amenable to a closed form Hungarian algorithm. However we show that for a fixed group the assignment optimization reduces to a case of the Hungarian algorithm.
This motivates the use of Hungarian algorithm iteratively, which guarantees convergence to a local optimum. 

\begin{prop}
\label{distributed}
Given the cost
\begin{equation*}
C_{jik}=
\begin{cases} \tau_1 \|v_{jk} + \sum_{-j,i,k}B_{-jik}v_{-jk}\|_2 -
 \tau_1\|\sum_{-j,i,k}B_{-jik}v_{-jk}\|_2 + \log \frac{m_{-ji}}{J-m_{-ji}} , \text{ if } i \leq L_{-j}\\
\tau_1 + \log\frac{\gamma_0}{J} - \log(i-L_{-j}), \text{ if }L_{-j}< i \leq L_{-j} + K_j,
 \end{cases}
 \end{equation*}
where $-j$ denotes groups excluding group $j$ and $L_{-j}$ is the number of global topics before group $j$ (due to exchangeability of the IBP, group $j$ can always be considered last). Then, a locally optimum MAP estimate for Eq. \eqref{eq:post_hbp_1} can be obtained by iteratively employing the Hungarian algorithm to solve: for each group $j$, $(((B_{jik})))$ which maximizes $\sum_{j,i,k}B_{jik}C_{jik}$, subject to constraints:
(a) for each fixed  $i$ and $j$, at most one of $B_{jik}$ is $1$, rest are $0$ and 
(b) for each fixed  $k$ and $j$, exactly one of $B_{jik}$ is $1$, rest are $0$.
After solving for $(((B_{jik})))$, $\theta_i$ is obtained as
$\theta_{i} =  \frac{\sum_{j,k}B_{jik}v_{jk}}{\|\sum_{j,k}B_{jik}v_{jk}\|_2}.$
\end{prop}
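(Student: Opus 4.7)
I will argue the result in two stages: (i) profile out the topic directions $\theta_i$ from the posterior by optimizing them on the unit sphere, and (ii) show that the resulting group-conditional objective for the binary matrix $B$ is a linear assignment problem with cost matrix $C_{jik}$.

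\textbf{Stage (i): optimizing $\theta$ for fixed $B$.} The only place $\theta_i$ enters the log of Eq.~\eqref{eq:post_hbp_1} is the vMF emission term, which is linear in $\theta_i$, namely $\tau_1 \langle \theta_i, \sum_{j,k} B_{jik} v_{jk}\rangle$. Maximizing a linear functional on the unit sphere yields $\theta_i^{\star} = \sum_{j,k} B_{jik} v_{jk}/\|\sum_{j,k} B_{jik} v_{jk}\|_2$, matching the claimed update. Substituting $\theta^\star$ back gives a profile log-posterior in $B$ whose emission contribution is $\tau_1 \sum_i \|\sum_{j,k} B_{jik} v_{jk}\|_2$, while the IBP prior term is untouched.

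\textbf{Stage (ii): group-conditional assignment problem.} Fix $B_{-j}$ and optimize the profile log-posterior over $B_{j\cdot\cdot}$ under the stated matching constraints. By exchangeability of the IBP I may treat group $j$ as arriving last. For an existing global topic $i \le L_{-j}$, turning $B_{jik}$ from $0$ to $1$ changes the emission term by $\tau_1\bigl(\|s_{-ji}+v_{jk}\|_2 - \|s_{-ji}\|_2\bigr)$ with $s_{-ji}:=\sum_{-j,k'} B_{-jik'} v_{-jk'}$, and adds the Bernoulli log-odds increment $\log(m_{-ji}/(J-m_{-ji}))$ from the IBP conditional $b_{ji}\sim\mathrm{Bern}(m_{-ji}/J)$; their sum is the first branch of $C_{jik}$. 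For $i>L_{-j}$, activating $B_{jik}$ instantiates a brand-new topic supported only by $v_{jk}$, so $\theta_i^\star=v_{jk}$ and the emission contribution is $\tau_1\|v_{jk}\|_2=\tau_1$. The IBP mass function for the number $K_{\mathrm{new}}$ of new dishes brought in by group $j$ is $\mathrm{Pois}(\gamma_0/J)$, contributing $K_{\mathrm{new}}\log(\gamma_0/J)-\log K_{\mathrm{new}}!$; redistributing the factorial across new topics as $-\sum_{l=1}^{K_{\mathrm{new}}}\log l$ assigns $-\log(i-L_{-j})$ to the $(i-L_{-j})$-th new topic, yielding the second branch of $C_{jik}$.

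\textbf{Assignment structure and convergence.} With the profile log-posterior expressed (up to constants independent of $B_{j\cdot\cdot}$) as $\sum_{i,k} B_{jik}\,C_{jik}$ under constraints (a)--(b), the inner optimization is a bipartite linear assignment instance, solved globally by the Hungarian algorithm. Cycling through $j=1,\dots,J$ is block-coordinate ascent on the profile objective; since each sweep is non-decreasing and the objective is bounded above (each $\langle\theta_i,v_{jk}\rangle\le 1$ and the IBP prior is a probability), convergence to a local MAP follows.

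\textbf{Main obstacle.} The delicate step is bookkeeping the IBP prior conditional on $B_{-j}$: the $K_{\mathrm{new}}!$ normalizer in the Poisson factor for new dishes must be cleanly redistributed per-topic to reproduce the $-\log(i-L_{-j})$ offset; everything else reduces to standard linear-assignment and sphere-optimization facts. A secondary subtlety is that profiling $\theta$ is valid here because $\theta_i^\star$ is uniquely determined by $B$ whenever $\sum_{j,k}B_{jik}v_{jk}\neq 0$, so alternating maximization over $\theta$ and $B$ coincides with maximizing the profile objective in $B$.
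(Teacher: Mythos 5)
Your proposal is correct and takes essentially the same route as the paper's proof: profile out $\theta_i$ by maximizing the linear vMF term on the unit sphere (giving the normalized sum), then condition the IBP prior with group $j$ treated as last via exchangeability, splitting into the Bernoulli log-odds $\log\frac{m_{-ji}}{J-m_{-ji}}$ for existing topics and the $\mathrm{Pois}(\gamma_0/J)$ term with the factorial redistributed as $-\log(i-L_{-j})$ for new ones, which yields exactly the stated assignment cost solved per group by the Hungarian algorithm within a coordinate-ascent sweep. Your extra remarks on boundedness/convergence and on uniqueness of $\theta_i^{\star}$ are sound additions but do not change the argument.
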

The noisy topics for each of the groups can be obtained by applying CoSAC to corresponding documents, which is trivially parallel. Proof of Proposition \ref{distributed} is in Supplement \ref{supp:dm} and Distributed Matching algorithm is in Supplement \ref{supp:algos}.

\subsection{Dynamic Hierarchical Beta Process}
\label{subsec:combined}
Our ``full'' model, the Dynamic Hierarchical Beta Process model (dHBP), builds on the constructions described in subsections \ref{subsec:dynamic} and \ref{subsec:distributed} to enable the inference of temporal dynamics of collections of topic polytopes.  We start by specifying the upper level Beta Process given by Eq. \eqref{eq:top_bp} and base measure $H$ given by Eq. \eqref{eq:vmf_time}. Next, for each group $j=1,\ldots,J$, we introduce an additional level of hierarchy to model group specific distributions over topics
\begin{equation}
\label{eq:j_top__bp}
Q_j|Q \thicksim \text{BP}(\gamma_j, Q)\text{, then }
Q_j := \sum_i p_{ji} \delta_{\theta_i},
\end{equation}
where $p_{ji}$s vary around corresponding $q_i$. The distributional properties of $p_{ji}$ are described in \citep{thibaux2007hierarchical}.

At any given time $t$, each group $j$ selects a subset from the common pool of global topics:
\begin{equation}
\label{eq:tj_bern}
\Tp_j^{(t)}|Q_{jt}  \thicksim \text{BeP}(Q_{jt})\text{, then }
\Tp_j^{(t)}  := {\textstyle\sum}_{i=1} b_{ji}^{(t)} \delta_{\theta^{(t)}_i}\text{, where }b_{ji}^{(t)}|p_{ji} \thicksim \text{Bern}(p_{ji}),\,\forall i.
\end{equation}
Let $\Tp_j^{(t)} := \{\theta^{(t)}_i : b_{ji}^{(t)} = 1, i=1,2,\ldots \}$ be the corresponding collection of atoms -- topics active at time $t$ in group $j$. 
Noisy measurements of these topics are generated by:
\begin{equation}
\label{eq:tj_vmf_kalman}
v^{(t)}_{jk}|\Tp_j^{(t)} \thicksim \vmf(\Tp_{jk}^{(t)}, \tau_1)\text{ for }k = 1,\ldots,K_j^{(t)},\text{ where }K_j^{(t)} := \card(\Tp_j^{(t)}).
\end{equation}
The conditional distribution of global topics at $t$ given the state of the global topics at $t-1$ is
\begin{equation}
\label{eq:posterior_dHBP}
\begin{split}
& \P(\theta^{(t)},B^{(t)}| \theta^{(t-1)},v^{(t)},\{ B^{(a)}\}_{a=1}^{t-1}) \propto \\
 &
\exp\left(\tau_0 {\textstyle\sum}_{i=1} \langle \theta_i^{(t)}, \theta_i^{(t-1)} \rangle)\right) F(\{m_{ji}^{(t-1)}\},\{m_{ji}^{(t)}\})
\cdot \exp\left({\textstyle\sum}_{j,i,k} \tau_1 B_{jik}^{(t)} \langle \theta_i^{(t)}, v_{jk}^{(t)}\rangle\right),
\end{split}
\end{equation}
where $F(\{m_{ji}^{(t-1)}\},\{m_{ji}^{(t)}\})$ is the prior term dependent on the popularity counts history from current and previous time points. Analogous to the Chinese Restaurant Franchise \citep{teh2006hierarchical}, one can think of an Indian Buffet Franchise in the case of HBP. A headquarter buffet provides some dishes each day and the local branches serve a subset of those dishes. Although this analogy seems intuitive, we are not aware of a corresponding Gibbs sampler and it remains to be a question of future studies. Therefore, unfortunately, we are unable to handle this prior term directly and instead propose a heuristic replacement --- stripping away popularity of topics across groups and only considering group specific topic popularity (groups still remain dependent through the atom locations).

\paragraph{Streaming Dynamic Distributed Matching (SDDM)} We combine our results to perform approximate inference of the model in Section \ref{subsec:combined}. Using Hungarian algorithm, iterating over groups at time $t$ obtain estimates for $(((B_{jik}^{(t)})))$ based on the following cost $C_{jik}^{(t)} =$
\begin{equation*}
\begin{cases} \|\tau_1 v_{jk}^{(t)} + \tau_1 \sum_{-j,i,k}B_{-jik}^{(t)}v_{-jk}^{(t)} + \tau_0\theta_i^{(t-1)}\|_2 - \|\sum_{-j,i,k}B_{-jik}^{(t)}v_{-jk}^{(t)} + \tau_0\theta_i^{(t-1)}\|_2 + \log \frac{1+m_{ji}^{(t)}}{t - m_{ji}^{(t)}},\\
 \tau_1 + \log\frac{\gamma_0}{J}- \log(i-L_{-j}^{(t)}),\text{ if } L_{-j}^{(t)} < i \leq L_{-j}^{(t)} + K_j^{(t)},
 \end{cases}
 \end{equation*}
 
where first case is if $i \leq L_{-j}^{(t)}$; $m_{ji}^{(t)}$ denotes the popularity of topic $i$ in group $j$ up to time $t$ (plus one is used to indicate that global topic $i$ exists even when $m_{ji}^{(t)}=0$). Then compute global topic estimates
$\theta_i^{(t)}= \frac{\tau_1\sum_{j,k}B_{jik}^{(t)}v_{jk}^{(t)} +\tau_0 \theta_i^{(t-1)}}{\|\tau_1\sum_{j,k}B_{jik}^{(t)}v_{jk}^{(t)} +\tau_0 \theta_i^{(t-1)}\|_2}.$
At time point $t$, the noisy topics for each of the groups can be obtained by applying CoSAC to corresponding documents in parallel. SDDM algorithm is described in Supplement \ref{supp:algos}.



\section{Experiments}
\label{sec:experiments}


We study ability of our models to learn the latent temporal dynamics and discover new topics that change over time. Next we show that our models scale well by utilizing temporal and group inherent data structures. We also study hyperparameters choices. We analyze two datasets: the Early Journal Content (\url{http://www.jstor.org/dfr/about/sample-datasets}), and a collection of Wikipedia articles partitioned by categories and in time according to their popularity.

\subsection{Temporal Dynamics and Topic Discovery}


\paragraph{Early Journal Content.}
The Early Journal Content dataset spans years from $1665$ up to $1922$. Years before $1882$ contain very few articles, and we aggregated them into a single timepoint. After preprocessing, dataset has $400k$ scientific articles from over $400$ unique journals. The vocabulary was truncated to $4516$ words. We set all articles from the last available year ($1922$) aside for the testing purposes. 



\paragraph{Case study: epidemics.} The beginning of the $20$th century is known to have a vast history of disease epidemics of various kinds, such as smallpox, typhoid, yellow fever to name a few. Vaccines or effective treatments for the majority of them were developed shortly after. One of the journals represented in the EJC dataset is the "Public Health Report"; however, publications from it are only available starting $1896$. Primary objective of the journal was to reflect epidemic disease infections. As one of the goals of our modeling approach is topic discovery, we verify that the model can discover an epidemics-related topic around $1896$. Figure \ref{fig:dbp_epidemics} shows that SDM correctly discovered a new topic is $1896$ semantically related to epidemics. We plot the evolution of probabilities of the top $15$ words in this topic across time. We observe that word "typhoid" increases in probability towards $1910$ in the "epidemics" topic, which aligns with historical events such as Typhoid Mary in $1907$ and chlorination of public drinking water in the US in $1908$ for controlling the typhoid fever. The probability of "tuberculosis" also increases, aligning with foundation of the National Association for the Study and Prevention of Tuberculosis in $1904$.

\paragraph{Case study: law.} Some of the EJC journals are related to the topic of law. Our DM algorithm identified a global topic semantically similar to law by matching similar topics present in 32 out of the 417 journals. In Figure \ref{fig:ejc_dm_law} we present the learned global topic and 4 examples of the matched local topics with the corresponding journal names. Our algorithm correctly identified that these journals have a shared law topic.


\subsection{Scalability}

\begin{figure}
  \centering 
  \begin{subfigure}{0.52\textwidth}
    \centering 
    \includegraphics[scale=0.25]{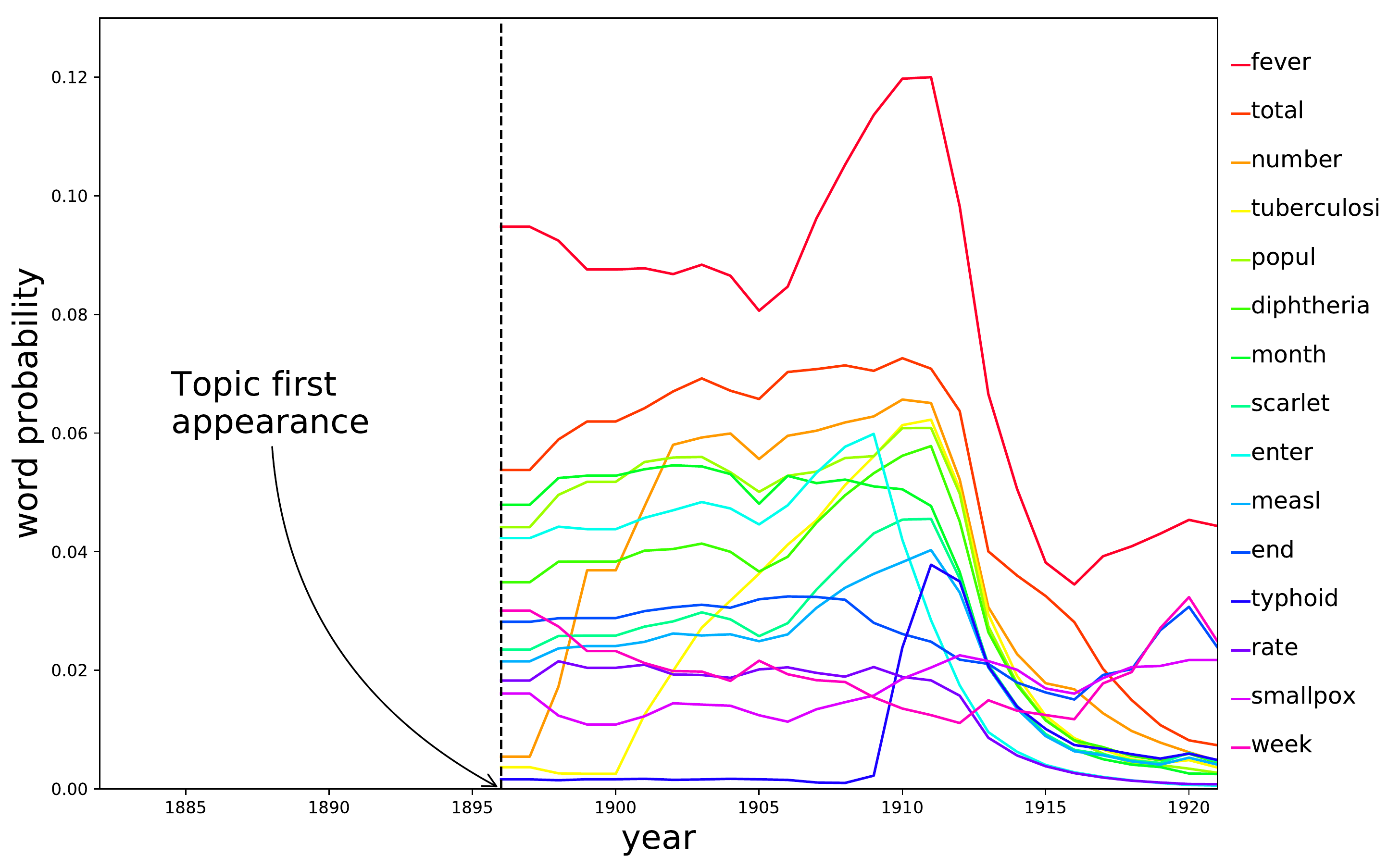}
    \caption{SDM \emph{Epidemics}: evolution of top 15 words}
    \label{fig:dbp_epidemics}
  \end{subfigure}
  \begin{subfigure}{0.46\textwidth}
    \centering
    \vspace{0.07in}
    \includegraphics[scale=0.2]{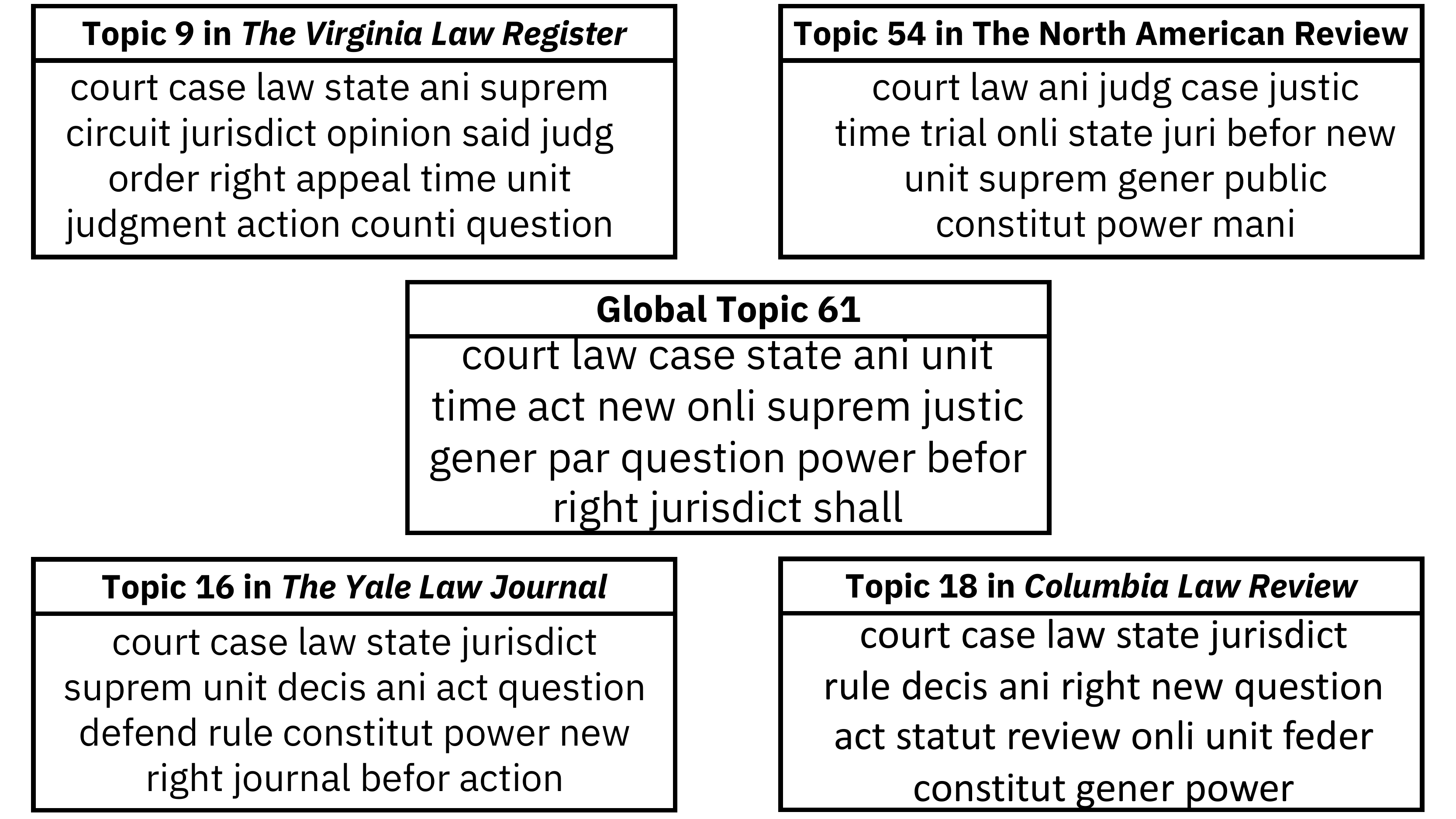}
    \vspace{0.07in}
    \caption{DM \emph{Law}: matched topics from journals}
    \label{fig:ejc_dm_law} 
  \end{subfigure}
  \caption{Qualitative examples of topics learned by SDM and DM algorithms on the EJC data}
  \label{fig:ejc_interpretability}
  \vskip -0.1in
\end{figure}


\begin{wrapfigure}[5]{r}{0.5\linewidth}
\vspace{-.7in}
  \centering 
  \begin{subfigure}{0.23\textwidth}
    \centering 
    \includegraphics[width=\linewidth]{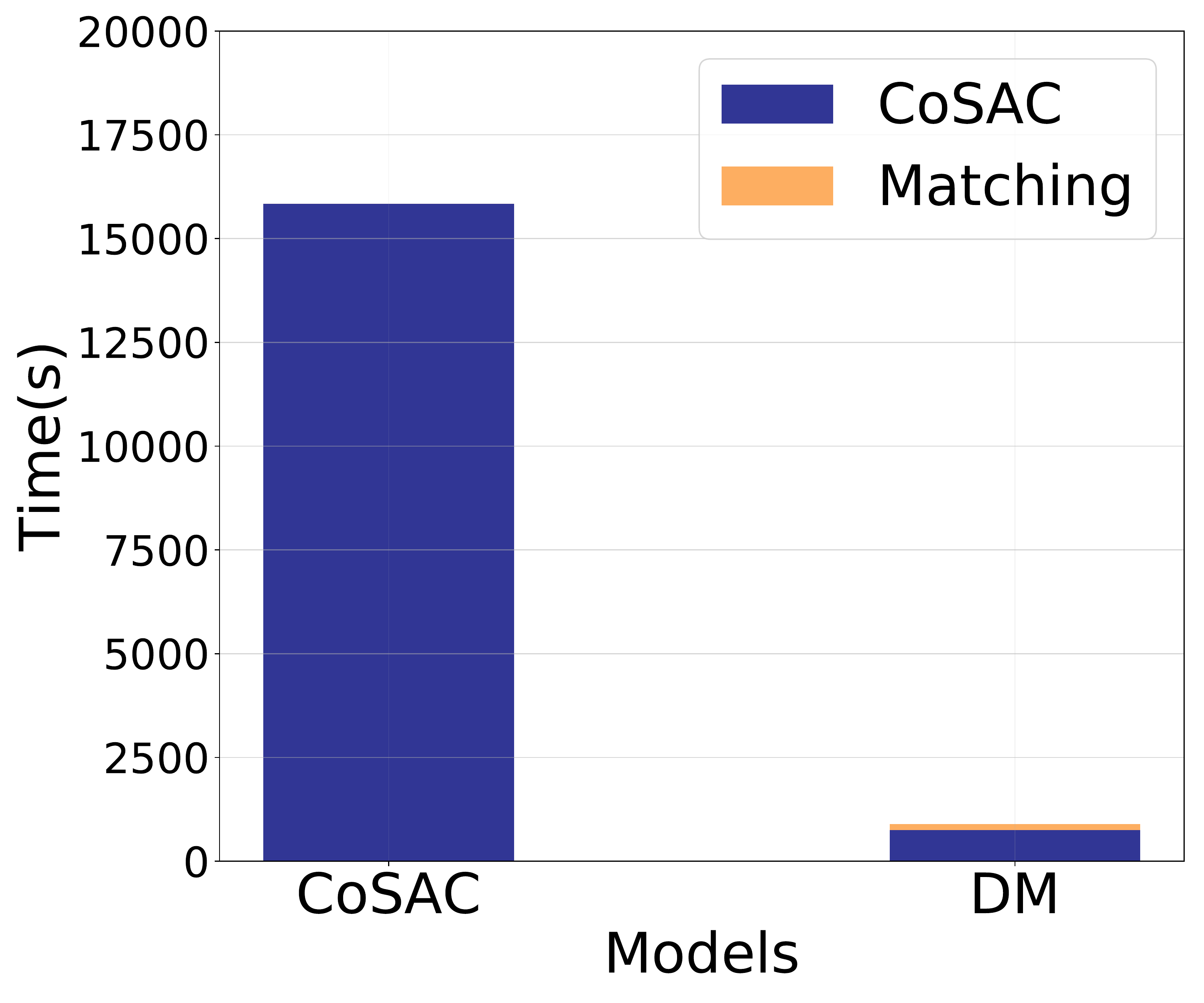}
    \vskip -0.1in
  \end{subfigure}
  \begin{subfigure}{0.23\textwidth}
    \centering
    \includegraphics[width=\linewidth]{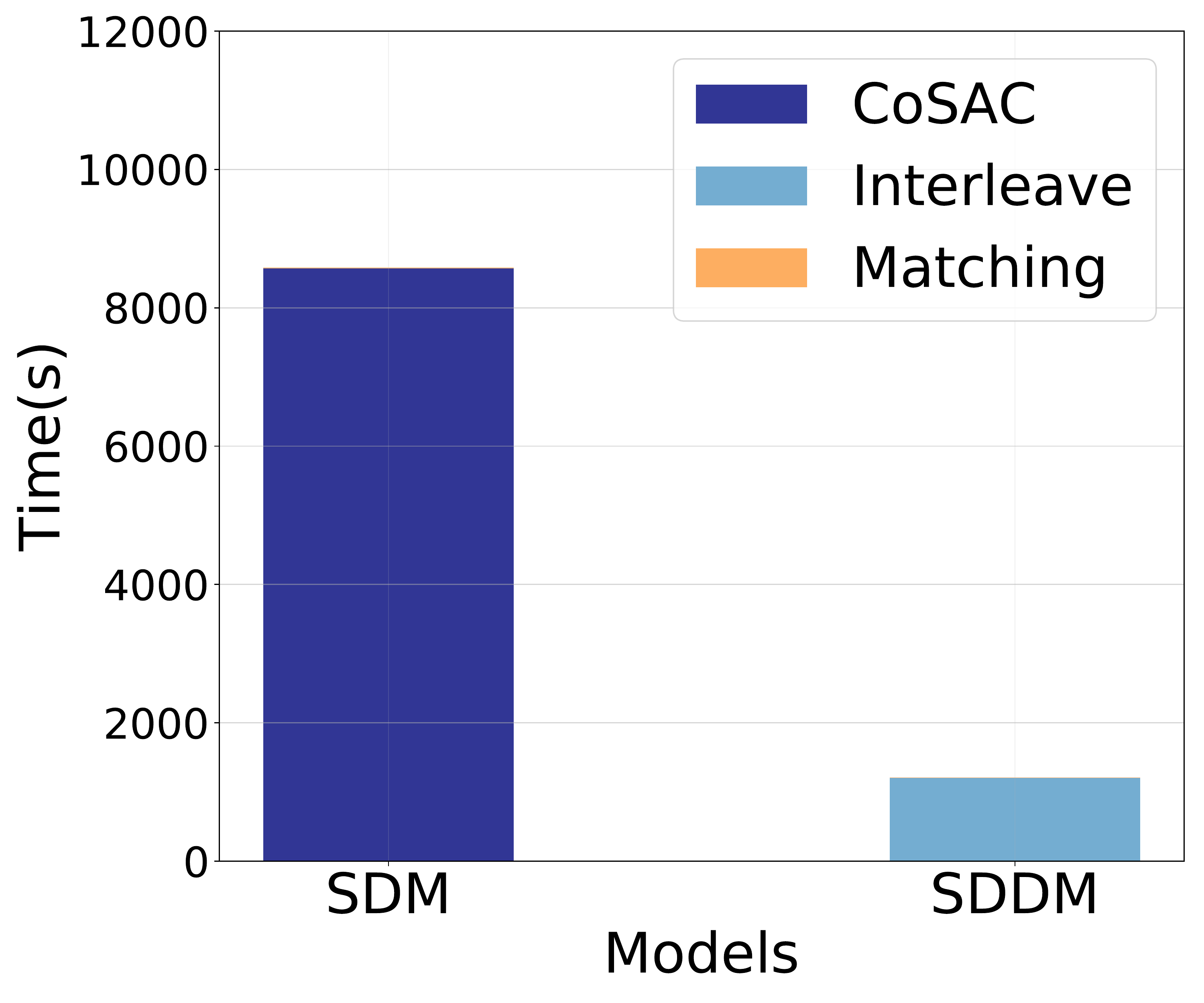}
    \vskip -0.1in
  \end{subfigure}
  \vskip -0.1in
  \caption{Comparison on Wiki Data (20 cores)}
  \label{fig:model_comp_wiki}
\end{wrapfigure}

\paragraph{Wiki Corpus.} We collected articles from Wikipedia and their page view counts for the $12$ months of $2017$ and category information (e.g., Arts, History). We used categories as groups and partitioned the data across time according to the page view counts. Dataset construction details are given in Supplement \ref{supp:data_wiki}. The total number of documents is about $3$ million, and we reduced vocabulary to $7359$ words similarly to \citep{hoffman2010online}. For testing we set aside documents from category Art from December $2017$.

\paragraph{Modeling Grouping.}
In Fig. \ref{fig:model_comp_wiki} we present comparisons
on Wiki data: CoSAC \citep{yurochkin2017conic} v.s DM under the static \textit{distributed} setting and SDM v.s SDDM under the dynamic \textit{streaming} setting.
Fig. \ref{fig:model_comp_wiki} (left) shows that for data accessible in groups, DM outperforms CoSAC by$~\sim25X$, as DM runs CoSAC on different data groups in parallel and then matches the outputs. Matching time adds only a small overhead compared to the runtime of CoSAC. Similarly, in Fig. \ref{fig:model_comp_wiki} (right), SDDM is $\sim 6X$ faster than SDM, since SDDM can process documents of different groups in parallel and interleaves CoSAC with matching: while matching is being performed on data groups with timestamp $t$, CoSAC can process the data that arrives with timestamp $t+1$ in parallel.

\begin{table*}[t]
\vskip -0.1in
\caption{Modeling topics of EJC \hspace{0.5cm} || Modeling Wikipedia articles}
\vskip -0.1in
\centering
\begin{tabular}{lrrrr || rrrr}
\toprule
{} &  Perplexity & Time &  Topics & Cores & Perplexity & Time &  Topics & Cores\\
\midrule
SDM  & \textbf{1179} &  22min &   125        &      1  & 1254 & 2.4hours  &      182   &      1\\
DM & 1361 &  5min &    125     &   20  & 1260 & \textbf{15min}  &    182     &   20\\
SDDM & 1241 & \textbf{2.3min} &  103 &  20 & \textbf{1201} & 20min &  238 &  20\\
DTM & 1194 &  56hours &  100 &  1 & NA & >72hours & 100 & 1 \\
SVB & 1840 & 3hours & 100 & 20 & 1219 & 29.5hours & 100 & 20\\
CoSAC & 1191 & 51min & 132 & 1 & 1227 & 4.4hours& 173 & 1\\
\bottomrule
\end{tabular}
\label{table:results}
\vskip -0.2in
\end{table*}


\paragraph{Modeling temporality}
also benefits scalability. We compare our methods with other topic models on both Wiki and EJC datasets: Streaming Variational Bayes (SVB) \citep{broderick2013streaming} and Dynamic Topic Models (DTM) \citep{blei2006dynamic} trained with $100$ topics. Perplexity scores on the held out data, training times, computing resources and number of topics are reported in Table \ref{table:results}.
On the wiki dataset, SDDM took only $20$min to process approximately $3$ million documents, which is much faster than the other approaches.

Regarding perplexity scores, SDDM generally outperforms DM, which suggests that modeling time is beneficial. For the EJC dataset, SDM outperforms SDDM. Modeling groups might negatively affect perplexity because the majority of the EJC journals (groups) have very few articles (i.e. less than $100$ -- a setup challenging for many topic modeling algorithms). On the Wiki corpus each category (group) has sufficient amount of training documents and time-group partitioning considered by SDDM achieves the best perplexity score.



\begin{wrapfigure}[12]{r}{0.25\linewidth}
\renewcommand{\figurename}{Fig.}
\vspace{-.75in}
  \centering 
  \begin{subfigure}[b]{0.25\textwidth}
  \centering
  \includegraphics[scale=0.15]{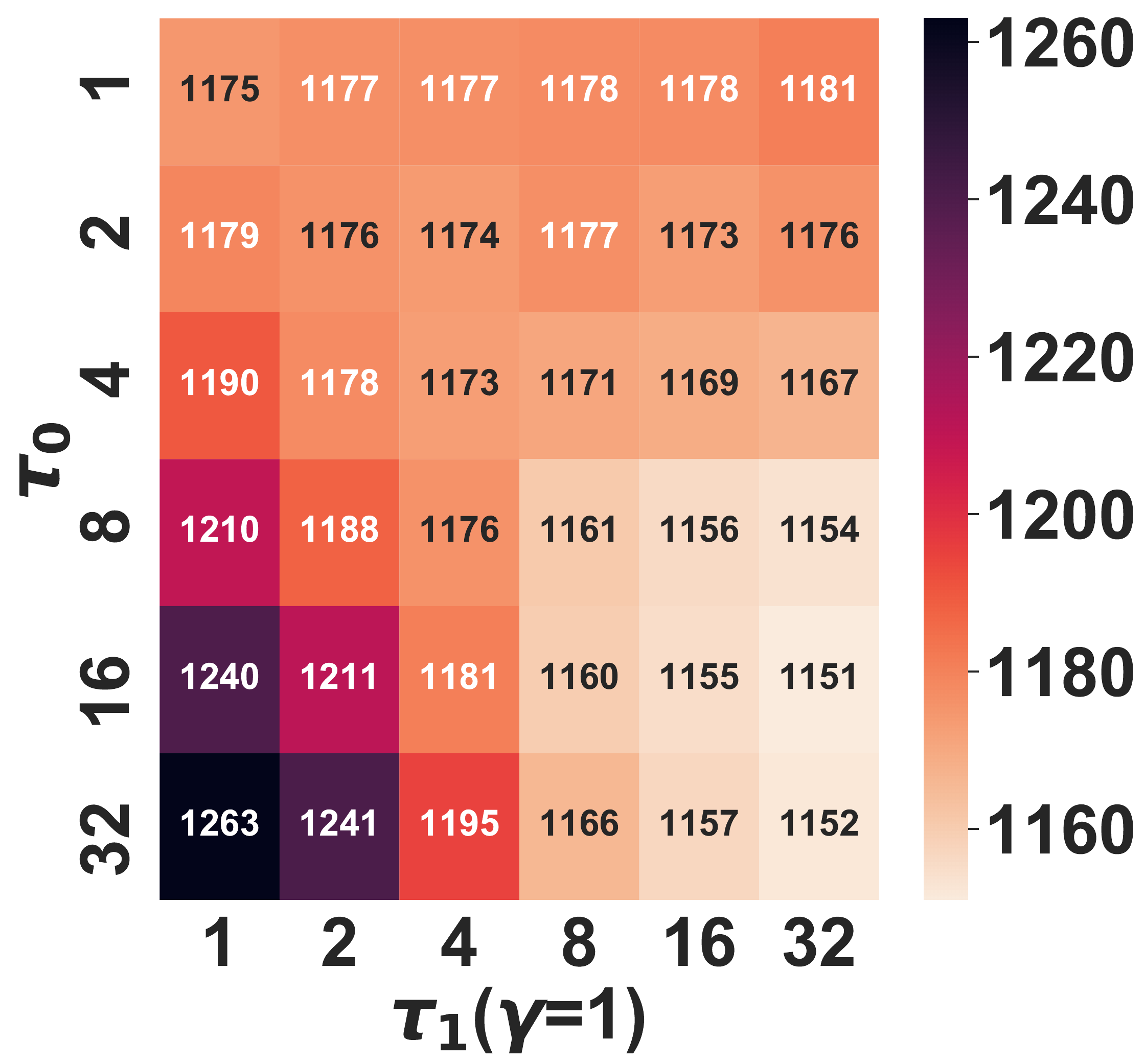}
  \vskip -0.08in
  \caption{EJC perplexity}
  \label{fig:ejc_sdm_perplexity}
  \end{subfigure} \\
  \begin{subfigure}[b]{0.25\textwidth}
  \centering
  \includegraphics[scale=0.15]{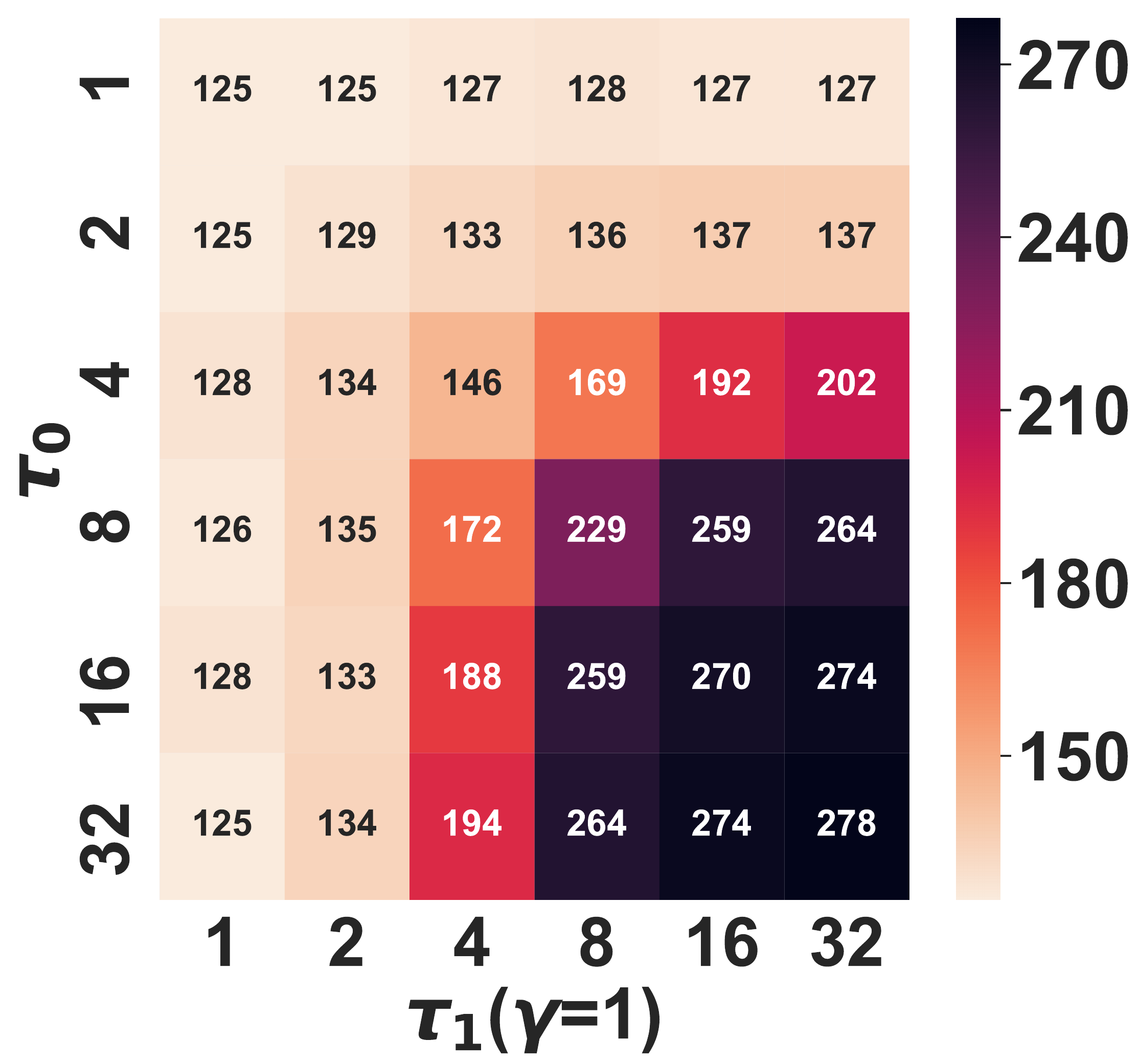}
  \vskip -0.08in
  \caption{\# of topics in EJC}
  \label{fig:ejc_sdm_topic}
  \end{subfigure}
  \vskip -0.05in
  \caption{SDM parameters}
  \label{fig:sdm_sensitivity}
  \vskip -0.15in
\end{wrapfigure}

\subsection{Parameter choices}
The rate of topic dynamics of the SDM and SDDM is effectively controlled by $\tau_0$, where smaller values imply higher dynamics rate. Parameter $\tau_1$ controls variance of local topics around corresponding global topics in all of our models. This variance dictates how likely a local topic to be matched to an existing global topic. When this variance is small, the model will tend to identify local topics as new global topics more often. Lastly, $\gamma_0$ affects the probability of new topic discovery, which scales with time and number of groups. In the preceding experiments we set $\tau_0=2,\tau_1=1,\gamma_0=1$ for SDM; $\tau_1=2,\gamma_0=1$ for DM; $\tau_0=4,\tau_1=2,\gamma_0=2$ for SDDM. In Figure \ref{fig:sdm_sensitivity} we show heatpmaps for perplexity and number of learned topics, fixing $\gamma_0=1$ and varying $\tau_0$ and $\tau_1$. We see that for large $\tau_1$, SDM identifies more topics to fit the smaller variability constraint imposed by the parameter.

\section{Discussion and Conclusion}
\label{sec:discuss}

Our work suggests the naturalness of incorporating sophisticated Bayesian nonparametric techniques 
in the inference of rich latent geometric structures of interest.
We demonstrated the feasibility of \emph{approximate} nonparametric learning at scale, by utilizing suitable geometric representations and devising fast algorithms for obtaining reasonable point estimates for such representations. Further directions include incorporating more meaningful geometric
features into the models (e.g., via more elaborated base measure modeling for the Beta Process) and developing efficient algorithms for full Bayesian inference.
For instance, the latent geometric structure of the problem is solely encoded in the base measure.
We want to explore choices of base measures for other geometric structures such as collections of k-means centroids, principal components, etc. Once an appropriate base measure is constructed, our Beta process based models can be utilized
to enable a new class of Bayesian nonparametric models amenable to scalable inference and suitable for analysis of large datasets. In our concurrent work we have utilized model construction similar to one from Section \ref{subsec:distributed} to perform Federated Learning of neural networks trained on heterogeneous data \citep{yurochkin2019pfnm} and proposed a general framework for model fusion \citep{yurochkin2019spahm}.

\subsubsection*{Acknowledgments}
This research is supported in part by grants NSF CAREER DMS-1351362, NSF CNS-1409303, a research
gift from Adobe Research and a Margaret and Herman Sokol Faculty Award to XN.

\bibliography{MY_ref}
\bibliographystyle{natbib}

\clearpage
\appendix

\section{Derivations of posterior probabilities}
\subsection {Dynamic Beta Process posterior}
\label{supp:sdm_posterior}

The departing point for arriving at MAP estimation algorithm for the Dynamic Beta Process proposed in Section \ref{subsec:dynamic} of the main text is the posterior derivation at a time point $t$ (Eq. \eqref{eq:time_bp_1} of the main text):
\begin{equation}
\label{eq:time_bp}
\begin{split}
 &\P( \theta^{(t)}, B^{(t)} | \theta^{(t-1)}, \{ B^{(a)}\}_{a=1}^{t-1}) \P(v^{(t)}| \theta^{(t)}, B^{(t)}) \propto\\ 
 & \P(\theta^{(t)},B^{(t)}| \theta^{(t-1)},v^{(t)},\{ B^{(a)}\}_{a=1}^{t-1}) \propto \\
 & \prod_{i=1}^{L_{t-1}} \left( \left(\frac{m_i^{(t-1)}}{t-m_i^{(t-1)}}\right)^{\sum_{k=1}^ {K^{(t)}} B_{ik}^{(t)}} \exp(\tau_0 \langle \theta_i^{(t-1)}, \theta_i^{(t)}\rangle)\right) \\
 & \cdot \frac{\exp(-\frac{\gamma_0}{t}) (\gamma_0/t)^{L_t-L_{t-1}}}{(L_t-L_{t-1})!} \exp(\tau_1 \sum_{i=1}^{L_t} \sum_{k=1}^{K^{(t)}} B_{ik}^{(t)}\langle \theta_i^{(t)},v_k^{(t)}\rangle).
\end{split}
\end{equation}
Starting with the vMF emission probabilities,
\begin{equation}
\label{eq:sup:vmf_data}
\P(v^{(t)}| \theta^{(t)}, B^{(t)}) \propto \exp(\tau_1 \sum_{i=1}^{L_t} \sum_{k=1}^{K^{(t)}} B_{ik}^{(t)}\langle \theta_i^{(t)},v_k^{(t)}\rangle),
\end{equation}
we obtain the last term of Eq. \eqref{eq:time_bp}.  The conditional distribution of $\theta^{(t)}, B^{(t)}$ given $\theta^{(t-1)}, \{ B^{(a)}\}_{a=1}^{t-1}$, obtained when random variables $\{q_i\}_{i=1}^\infty$ are marginalized out, can be decomposed into two parts: parametric part -- time $t$-th incarnations of subset of previously observed global topics $\theta^{(t-1)}$ and nonparametric part -- number of new topics appearing at time $t$. The middle term can be seen to come from the Poisson prior on the number of new topics induced by the Indian Buffet Process (see \cite{thibaux2007hierarchical} for details):
\begin{equation}
\label{eq:sup:pois}
\begin{split}
& L_t - L_{t-1} \thicksim \text{Pois}(\gamma_0/t) \\
& \P(L_t - L_{t-1}=l_t-l_{t-1}) = \frac{\exp(-\frac{\gamma_0}{t}) (\gamma_0/t)^{l_t-l_{t-1}}}{(l_t-l_{t-1})!}.
\end{split}
\end{equation}
Finally, the first term of Eq. \eqref{eq:time_bp} is composed of a probability of previously observed global topic to appear at time $t$:
\begin{equation}
\label{eq:sup:bern}
\P(\sum_k B_{ik}^{(t)}, i \in \{ 1,\dots,L_{t-1} \}|\{ B^{(a)}\}_{a=1}^{t-1})
\propto ({m_i^{(t-1)}})^{\sum_k B_{ik}^{(t)}}({t-m_i^{(t-1)}})^{1-\sum_k B_{ik}^{(t)}},
\end{equation}
where $m_i^{(t-1)}$ denotes the number of times topic $i$ appeared up to time $t$. 
Also, the base measure probability of the vMF dynamics is:
\begin{equation}
\label{eq:sup:vmf_H}
\P(\theta_i^{(t)}|\theta_i^{(t-1)}) \propto \exp(\tau_0 \langle \theta_i^{(t-1)}, \theta_i^{(t)}\rangle).
\end{equation}
Combining \Crefrange{eq:sup:vmf_data}{eq:sup:vmf_H} we arrive at Eq. \eqref{eq:time_bp} (Eq. \eqref{eq:time_bp_1} of the main text).

\subsection{Posterior of the Beta process for multiple topic polytopes}
\label{supp:dm_posterior}

First recall Eq. \eqref{eq:post_hbp_1} of the main text:
\begin{equation}
\label{eq:post_hbp}
\begin{split}
 \P(B,\theta|v) \propto & \exp(\tau_1 \sum_{j,i,k} B_{jik} \langle\theta_i,v_{jk}\rangle)\text{IBP}(\{m_i\}),
\end{split}
\end{equation}
To arrive at this result first note that $\P(B,\theta|v) \propto \P(v|\theta,B) \P(B) \P(\theta)$, where $\P(\theta)$ is a uniform distribution on sphere from the model specification of Section \ref{subsec:distributed} of the main text and hence is a constant. Next, the likelihood 
\begin{align}
\P(v|\theta,B) & \propto\exp(\tau_1 \sum_{i,j,k} B_{jik} \langle\theta_i,v_{jk}\rangle) . \nonumber
\end{align}

Integrating the latent Beta Process, it can be verified that $B$ follows an IBP marginally \citep{thibaux2007hierarchical}, i.e. $\P(B)=\text{IBP}(\{m_i\})$.

\section{Proofs for Lemma \ref{lem:simplex_to_sphere} and Propositions}
\subsection{Proof for Lemma \ref{lem:simplex_to_sphere}.}
\label{supp:lemma1}

\begin{lemn}[\ref{lem:simplex_to_sphere}]
$\Gamma: \{\beta=(\beta_1,\ldots,\beta_V)\in \Delta^{V-1}: \beta_i=0 \text{ for some } i\} \rightarrow \{\theta \in \mathbb{S}^{V-1} : \mathbbm{1}_V^{T}\theta=0 \}$ is a homeomorphism, where $\Gamma(\beta)=\left(\beta-C\right)/\|\beta-C\|_2$, and $\Gamma^{-1}(\theta)=-\frac{\theta}{\max_i \theta_i/c_i}+C$, for any  $C=(c_1,\ldots,c_V) \in \Delta^{V-1}$.
\end{lemn}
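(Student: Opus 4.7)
The plan is to verify separately that $\Gamma$ maps into the claimed target set, to derive the inverse via a ray-intersection construction, and to confirm continuity of both maps. For the image of $\Gamma$, since both $\beta$ and $C$ lie in $\Delta^{V-1}$, one has $\mathbbm{1}_V^T(\beta-C)=0$, a property preserved by scalar division, and unit norm is built into the definition. The only issue is avoiding the singularity $\beta=C$, which is handled by requiring $C$ to lie in the relative interior of the simplex (every $c_i>0$), an assumption implicit in the formula for $\Gamma^{-1}$ itself. Under this assumption $\Gamma$ is manifestly continuous on the (compact) boundary of $\Delta^{V-1}$.

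To construct (and thereby verify) the inverse, I would use a ray-intersection argument. Given $\theta$ in the target set, parameterize $\beta(r)=C+r\theta$; since $\mathbbm{1}_V^T\theta=0$, every $\beta(r)$ stays in the affine hull of $\Delta^{V-1}$, and since $\theta$ has zero sum but unit norm, it has both strictly positive and strictly negative entries. The nonnegativity constraints $c_i+r\theta_i\ge 0$ restricted to indices with $\theta_i<0$ then determine a unique positive $r^*$ at which some coordinate first hits zero. A direct calculation identifies $r^*$ with the reciprocal appearing in the stated closed form for $\Gamma^{-1}$. The composition $\Gamma^{-1}\circ\Gamma=\mathrm{id}$ then holds because the ray from $C$ through $\beta$ first exits the simplex at $\beta$ itself (by uniqueness of $r^*$), and $\Gamma\circ\Gamma^{-1}=\mathrm{id}$ holds because $r^*\theta$ normalizes back to $\theta$ as $r^*>0$.

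Finally, continuity of $\Gamma^{-1}$ can be argued from its explicit formula, since a minimum or maximum of finitely many continuous functions is continuous and, on the target set, the relevant denominator is bounded away from zero by compactness. A cleaner route, which I would prefer, is to observe that the domain is compact and the codomain is Hausdorff, so continuity of the inverse of a continuous bijection is automatic. The main obstacle is purely algebraic bookkeeping: confirming that the exact $\min/\max$ sign convention in the stated closed-form matches the outcome of the ray-intersection derivation. There is no deep topological difficulty.
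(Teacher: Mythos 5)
Your plan is correct, and it takes a mildly different route from the paper's: the paper simply verifies that the given closed-form map $\Gamma_\eta(\theta)=\eta\theta+C$ with $\eta_\theta=-1/\max_i(\theta_i/c_i)$ lands in the set $\{\beta\in\Delta^{V-1}:\beta_i=0\text{ for some }i\}$ and composes with $\Gamma$ to the identity, whereas you \emph{derive} the inverse as the first exit point of the ray $C+r\theta$ and get continuity of the inverse either from the formula or from the compact-domain/Hausdorff-codomain shortcut (the paper just asserts continuity). Your route buys two things: it makes explicit that $C$ must lie in the relative interior (all $c_i>0$), which the lemma's ``any $C\in\Delta^{V-1}$'' glosses over, and it settles the sign bookkeeping you flagged --- and that check is worth carrying out, because it exposes a slip in the printed formula. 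The first-exit radius is $r^*=\min_{i:\theta_i<0}(-c_i/\theta_i)=1/\max_i(-\theta_i/c_i)>0$, so the inverse is $\Gamma^{-1}(\theta)=\theta/\max_i(-\theta_i/c_i)+C=-\theta/\min_i(\theta_i/c_i)+C$; with the printed $-\theta/\max_i(\theta_i/c_i)+C$ the scalar multiplying $\theta$ is negative, so the resulting point does lie on the simplex boundary but normalizes back to $-\theta$ (e.g.\ $V=2$, $C=(\tfrac12,\tfrac12)$, $\theta=(1,-1)/\sqrt2$ yields $(0,1)$, whose image under $\Gamma$ is $-\theta$); the paper's verification overlooks this by implicitly treating $\eta_\theta$ as positive when normalizing. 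Also note that your uniqueness-of-first-exit observation is exactly what gives $\Gamma^{-1}\circ\Gamma=\mathrm{id}$ (segments from an interior point to any point of the simplex stay in the relative interior before the endpoint), so no additional injectivity argument is needed.
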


\begin{proof}
Given any $\beta \in \{\beta=(\beta_1,\ldots,\beta_V)\in \Delta^{V-1}: \beta_i=0 \text{ for some } i\}$ let ${\Gamma}(\beta)=\left(\beta-C\right)/\|\beta-C\|_2$. Clearly this is a continuous map.
Consider the maps $\Gamma_\eta(\theta)=\eta\theta+C$. We show $\Gamma_{\eta_{\theta}}=\tilde{\Gamma}^{-1}$, for $\eta_{\theta}=-\frac{1}{\max_i\theta_i/c_i}$. Notice that $\Gamma_{\eta_{\theta}}(x) \in \Delta^{V-1}$, since $\Gamma_{\eta_{\theta}}(\theta)=1$ and $\Gamma_{\eta_{\theta}}(\theta)_i \geq 0$ for all $i$. The boundary condition $\Gamma_{\eta_{\theta}}(\theta)_i=0 $ for some $i$ is also satisfied, therefore the range of the map  $\Gamma_{\eta_{\cdot}}(\cdot) $ is  $\{\beta=(\beta_1,\ldots,\beta_V)\in \Delta^{V-1}: \beta_i=0 \text{ for some } i\}$, when $\theta \in \{\theta \in \mathbb{S}^{V-1} : \mathbbm{1}_V^{T}\theta=0 \}$. For any $\theta \in \{\theta \in \mathbb{S}^{V-1} : \mathbbm{1}_V^{T}\theta=0 \}, \Gamma_{\eta_{\theta}}\odot \tilde{\Gamma}(\theta)= \frac{\theta}{\|\theta\|_2}= \theta$ as $\|\theta\|_2=1$. The right inverse property is proved similarly.
\end{proof}

\subsection{Proof for Proposition \ref{prop:sdm}.}
\label{supp:sdm}
\begin{propn}[\ref{prop:sdm}]
Given the cost
\begin{equation*}
\begin{split}
C_{ik}^{(t)}=
\begin{cases}  \| \tau_1 v_k^{(t)} +\tau_0 \theta_i^{(t-1)} \|_2 - \tau_0 + \log\frac{m_i^{(t-1)}}{t-m_i^{(t-1)}}, i\leq L_{t-1}\\
 \tau_1 + \log\frac{\gamma_0}{t} - \log(i-L_{t-1}), L_{t-1} < i \leq L_{t-1} + K^{(t)}
 \end{cases}
 \end{split}
 \end{equation*}
consider the optimization problem $\max_{B^{(t)}} \sum_{i,k}B_{ik}^{(t)}C_{ik}^{(t)}$ subject to the constraints that
(a) for each fixed $i$, at most one of $B_{ik}^{(t)}$ is $1$ and the rest are $0$, and 
(b) for each fixed $k$, exactly one of $B_{ik}^{(t)}$ is $1$ and the rest are $0$.
%
Then, the MAP estimate for Eq. \eqref{eq:time_bp} can be obtained by the Hungarian algorithm, which solves for $((B_{ik}^{(t)}))$ to obtain:
\begin{equation*}
\theta_i^{(t)} = 
\begin{cases} \frac{\tau_1 v_{k}^{(t)} +\tau_0 \theta_i^{(t-1)}}{\|\tau_1 v_{k}^{(t)} +\tau_0 \theta_i^{(t-1)}\|_2}, \text{if }\exists \ k\  \text{s.t. }B_{ik}^{(t)}=1 \text{ and } i\leq L_{t-1}\\
v_k^{(t)}, \hspace{7pt} \text{ if }\exists \ k\  \text{s.t. }B_{ik}^{(t)}=1 \text{ and } i>L_{t-1} \text{ (new topic)}\\
\theta_i^{(t-1)} \hspace{55pt}\text{otherwise (topic is dormant at $t$).}
\end{cases}
\end{equation*}
\end{propn}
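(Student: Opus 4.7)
\textbf{Proof plan for Proposition \ref{prop:sdm}.}
The plan is to turn the joint MAP problem over $(\theta^{(t)}, B^{(t)})$ into a pure assignment problem over $B^{(t)}$ by first performing the inner maximization over each $\theta_i^{(t)}$ in closed form, and then recognizing the remaining objective as a linear cost on binary matrices satisfying the Hungarian constraints. I would start by taking the logarithm of the right-hand side of Eq.~\eqref{eq:time_bp_1}. Up to additive constants (independent of $(\theta^{(t)}, B^{(t)})$), the log-posterior separates into (i) Bernoulli/prior contributions $\sum_{i\le L_{t-1}} \bigl(\sum_k B_{ik}^{(t)}\bigr)\log\frac{m_i^{(t-1)}}{t-m_i^{(t-1)}}$, (ii) vMF drift contributions $\tau_0\sum_{i\le L_{t-1}}\langle\theta_i^{(t-1)},\theta_i^{(t)}\rangle$, (iii) the Poisson term $(L_t-L_{t-1})\log(\gamma_0/t)-\log((L_t-L_{t-1})!)$, and (iv) the emission term $\tau_1\sum_{i,k}B_{ik}^{(t)}\langle\theta_i^{(t)},v_k^{(t)}\rangle$.

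Next, for each $i$ I would maximize over $\theta_i^{(t)}\in\mathbb{S}^{V-2}$ with $B^{(t)}$ held fixed. For an existing topic ($i\le L_{t-1}$) the $\theta_i^{(t)}$-dependent terms collapse to $\langle \tau_0\theta_i^{(t-1)}+\tau_1\sum_k B_{ik}^{(t)} v_k^{(t)},\,\theta_i^{(t)}\rangle$, and by Cauchy--Schwarz on the sphere the maximizer is the normalized resultant vector, with maximal value equal to the norm of that resultant. Using the Hungarian constraint (a) that at most one $B_{ik}^{(t)}$ is $1$, this maximum is $\|\tau_1 v_k^{(t)}+\tau_0\theta_i^{(t-1)}\|_2$ when $i$ is matched to some $k$ and $\tau_0$ (achieved at $\theta_i^{(t)}=\theta_i^{(t-1)}$, i.e.\ the dormant case) otherwise; this yields the first and third branches of Eq.~\eqref{eq:sdm_theta}. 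For a new topic ($i>L_{t-1}$) only the emission term involves $\theta_i^{(t)}$ and the maximizer is $\theta_i^{(t)}=v_k^{(t)}$ with value $\tau_1$, giving the second branch.

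Substituting these optima back, the problem reduces to maximizing a linear functional in $B^{(t)}$. For $i\le L_{t-1}$, comparing the matched value to the dormant baseline $\tau_0$ shows the marginal gain of setting $B_{ik}^{(t)}=1$ is exactly $\|\tau_1 v_k^{(t)}+\tau_0\theta_i^{(t-1)}\|_2-\tau_0+\log\frac{m_i^{(t-1)}}{t-m_i^{(t-1)}}$, matching the first case of $C_{ik}^{(t)}$. For new indices $i=L_{t-1}+1,\ldots,L_{t-1}+K^{(t)}$, I would absorb the Poisson term using $\log((L_t-L_{t-1})!)=\sum_{m=1}^{L_t-L_{t-1}}\log m$, which lets me split it as a sum over the newly activated indices. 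Because both the emission value $\tau_1$ and $\log(\gamma_0/t)$ are $k$-independent, the only $i$-dependent part is $-\log(i-L_{t-1})$, which, being decreasing in $i-L_{t-1}$, is rearrangement-optimal when the lowest new indices are used first; this is automatically enforced by the Hungarian solver, and produces the second case of $C_{ik}^{(t)}$.

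The reduced problem is $\max_{B^{(t)}}\sum_{i,k}B_{ik}^{(t)}C_{ik}^{(t)}$ subject to the stated constraints on rows and columns, which is a rectangular assignment problem over $(L_{t-1}+K^{(t)})$ candidate rows and $K^{(t)}$ columns and is solved in polynomial time by the Hungarian algorithm. The main delicate step is the rewriting of the Poisson/factorial term in (iii) as an index-wise cost on the new rows so that the Hungarian oracle sees a linear objective; once that accounting is in place, the rest is bookkeeping. Plugging the Hungarian solution back into the closed forms obtained in the previous step yields Eq.~\eqref{eq:sdm_theta} and completes the proof.
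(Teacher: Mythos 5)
Your proposal is correct and follows essentially the same route as the paper's proof: take the log of Eq.~\eqref{eq:time_bp_1}, split old versus new topic indices, optimize each $\theta_i^{(t)}$ in closed form (your Cauchy--Schwarz argument is exactly the vMF self-conjugacy step in the paper), compare matched versus dormant increments to get the first cost case, and decompose the Poisson factorial as $\log\frac{(i-L_{t-1})!}{(i-L_{t-1}-1)!}=\log(i-L_{t-1})$ across new rows to get the second. Your explicit rearrangement remark about new indices being filled from the lowest first is a minor extra detail the paper leaves implicit, but the argument is the same.
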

\begin{proof}
First we express the logarithm of the posterior distribution Eq. \eqref{eq:time_bp} in a form of a \emph{matching} problem by splitting the terms related to previously observed topics and new topics:
\begin{equation}
\label{eq:Hungarian_1}
\begin{split}
& \log(\P (\theta^{(t)},B^{(t)}| \theta^{(t-1)},v^{(t)},\{ B^{(a)}\}_{a=1}^{t-1})) = \\
& = \sum_{i=1}^{L_{t-1}} \langle \tau_1 \sum_k B_{ik}^{(t)} v_k^{(t)} + \tau_0 \theta_i^{(t-1)},\theta_i^{(t)} \rangle
 + \sum_{i=1}^{L_{t-1}} \sum_k B_{ik}^{(t)}\log\frac{m_i^{(t-1)}}{t-m_i^{(t-1)}} + \\
& +  \sum_{i=L_{t-1}+1}^{L_{t-1} + K^{(t)}} \tau_1  \langle \sum_k B_{ik}^{(t)}v_k^{(t)}, \theta_i^{(t)}\rangle + \sum_{i=L_{t-1}+1}^{L_{t-1} + K^{(t)}} \sum_k B_{ik}^{(t)} \left(\log\frac{\gamma_0}{t} - \log\frac{(i-L_{t-1})!}{(i-L_{t-1}-1)!}\right).
\end{split}
\end{equation}

Next, consider the simultaneous maximization of $B^{(t)}$ and $\theta^{(t)}$. For $i \in \{1,\dots,L_{t-1}\}$, if $B_{ik}^{(t)}=1$, i.e., $v_{k}^{(t)}$ is a noisy version of $\theta_i^{(t)}$, then the increment in the posterior probability is:
$$\log\frac{m_i^{(t-1)}}{t-m_i^{(t-1)}} +  \langle \tau_1 v_k^{(t)} +\tau_0 \theta_i^{(t-1)},\theta_i^{(t)} \rangle.$$
On the other hand, if $B_{ik}^{(t)}=0$, this increment becomes $$\log\frac{m_i^{(t-1)}}{t-m_i^{(t-1)}} +  \langle \tau_0 \theta_i^{(t-1)},\theta_i^{(t)} \rangle.$$ 
Von Mises-Fisher distribution is conjugate to itself and so it admits a closed form MAP estimator:
$$\theta_i^{(t)}= \frac{ \tau_1 \sum_k B_{ik}^{(t)} v_k^{(t)} +\tau_0 \theta_i^{(t-1)}}{\| \tau_1 \sum_k B_{ik}^{(t)} v_k^{(t)} +\tau_0 \theta_i^{(t-1)}\|_2}.$$
Plugging this in, the difference between increments defining the cost of the Hungarian objective function is:
$$C_{ik}^{(t)} = \log\frac{m_i^{(t-1)}}{t-m_i^{(t-1)}} +  \| \tau_1 v_k^{(t)} +\tau_0 \theta_i^{(t-1)}\|_2 - \tau_0.$$ 

For $i > L_{t-1}$, it is seen easily from our representation in Eq. \eqref{eq:Hungarian_1} and recalling uniform prior for the new global topics, that the reward term of the objective becomes $C_{ik}^{(t)} = \tau_1  + \log(\gamma_0/t) - \log(i-L_{t-1})$ and that given $B_{ik}^{(t)}=1$, objective function is maximized for $\theta_i^{(t)} \in \mathbb{S}^{V-2}$, when $\theta_i^{(t)}= v_k^{(t)}$.
\end{proof}

\subsection{Proof for Proposition \ref{distributed}}
\label{supp:dm}
%

\begin{propn}[\ref{distributed}]
Given the cost
\begin{equation*}
C_{jik}=
\begin{cases} \tau_1 \|v_{jk} + \sum_{-j,i,k}B_{-jik}v_{-jk}\|_2 -
 - \tau_1\|\sum_{-j,i,k}B_{-jik}v_{-jk}\|_2 + \log \frac{m_{-ji}}{J-m_{-ji}} , \text{ if } i \leq L_{-j}\\
\tau_1 + \log\frac{\gamma_0}{J} - \log(i-L_{-j}), \text{ if }L_{-j}< i \leq L_{-j} + K_j,
 \end{cases}
 \end{equation*}
where $-j$ denotes groups excluding group $j$ and $L_{-j}$ is the number of global topics before group $j$ (due to exchangeability of the IBP, group $j$ can always be considered last). Then, a locally optimum MAP estimate for Eq. \eqref{eq:post_hbp} can be obtained by iteratively employing the Hungarian algorithm to solve: for each group $j$, $(((B_{jik})))$ which maximizes $\sum_{j,i,k}B_{jik}C_{jik}$, subject to constraints:
(a) for each fixed  $i$ and $j$, at most one of $B_{jik}$ is $1$, rest are $0$ and 
(b) for each fixed  $k$ and $j$, exactly one of $B_{jik}$ is $1$, rest are $0$.
After solving for $(((B_{jik})))$, $\theta_i$ is obtained as:
\begin{equation}
\label{eq:map_dm}
\theta_{i} =  \frac{\sum_{j,k}B_{jik}v_{jk}}{\|\sum_{j,k}B_{jik}v_{jk}\|_2}.
\end{equation}
\end{propn}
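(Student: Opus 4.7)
The plan is to show that, for fixed assignments $B_{-j}$ of all groups other than $j$, the joint maximization over the rows $B_{ji\cdot}$ and the global atoms $\theta_i$ reduces to a standard assignment problem with the stated cost, and that iterating this coordinate-wise over $j$ is a block coordinate ascent on the log-posterior, hence converges to a local optimum.

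First I would take the logarithm of Eq. \eqref{eq:post_hbp_1} and, by exchangeability of the IBP over groups (so that group $j$ may be viewed as the last one arriving), split the IBP prior into three contributions: (a) a Bernoulli-type factor $\left(m_{-ji}/J\right)^{\sum_k B_{jik}}\left(1-m_{-ji}/J\right)^{1-\sum_k B_{jik}}$ for each pre-existing global topic $i\le L_{-j}$, (b) a Poisson$(\gamma_0/J)$ factor for the number of brand-new global topics introduced by group $j$, and (c) terms not depending on $B_j$ or $\theta$. Expanding the Bernoulli factor and absorbing constants yields the coefficient $\log\frac{m_{-ji}}{J-m_{-ji}}$ whenever $\sum_k B_{jik}=1$, while the Poisson factor contributes $\log(\gamma_0/J) - \log(i-L_{-j})$ per new topic (the latter coming from the $1/(L_t-L_{t-1})!$ denominator written as a product of per-topic terms, analogous to Supplement \ref{supp:sdm_posterior}).

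Next I would profile out $\theta_i$. Since each $\theta_i\in\mathbb{S}^{V-2}$ appears only in $\exp\bigl(\tau_1 \langle \theta_i,\, \sum_{j,k} B_{jik} v_{jk}\rangle\bigr)$ (the base measure being uniform), the vMF self-conjugacy gives the closed-form maximizer
\begin{equation*}
\theta_i^\star \;=\; \frac{\sum_{j,k} B_{jik} v_{jk}}{\bigl\|\sum_{j,k} B_{jik} v_{jk}\bigr\|_2},
\end{equation*}
with maximal inner-product value $\tau_1\bigl\|\sum_{j,k} B_{jik} v_{jk}\bigr\|_2$. Substituting this back, the incremental gain of setting $B_{jik}=1$ for a pre-existing $i$ is the telescoped difference $\tau_1\bigl\|v_{jk}+\sum_{-j,k'} B_{-jik'} v_{-jk'}\bigr\|_2 - \tau_1\bigl\|\sum_{-j,k'} B_{-jik'} v_{-jk'}\bigr\|_2$, plus the Bernoulli log-odds $\log\frac{m_{-ji}}{J-m_{-ji}}$. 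For a newly created topic index $L_{-j}<i\le L_{-j}+K_j$, the norm telescopes to $\tau_1\|v_{jk}\|_2=\tau_1$ and the Poisson contribution gives $\log(\gamma_0/J) - \log(i-L_{-j})$. These are exactly the two branches of $C_{jik}$ in the statement.

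Finally I would observe that maximizing $\sum_{i,k} B_{jik} C_{jik}$ over $B_j$ subject to constraints (a)--(b) is a bipartite assignment problem between the $K_j$ noisy topics of group $j$ and the $L_{-j}+K_j$ candidate slots (existing plus potentially new), which the Hungarian algorithm solves exactly. Updating $B_j$ this way (and then $\theta_i^\star$ via Eq.~\eqref{eq:map_dm}) strictly increases the log-posterior, so cycling over $j=1,\dots,J$ is monotone and hence converges to a local maximum of Eq.~\eqref{eq:post_hbp_1}. The main obstacle is the bookkeeping in step (a)--(b): one must verify that the telescoped norm difference is indeed the correct marginal gain regardless of how many other $v_{-jk'}$ have already been linked to global topic $i$, and that treating group $j$ as ``last'' via IBP exchangeability produces the log-odds form uniformly for every $j$; once these are checked, the rest is the standard Hungarian reduction.
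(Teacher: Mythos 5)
Your proposal is correct and follows essentially the same route as the paper's proof: profile out each $\theta_i$ via vMF self-conjugacy to get the norm $\tau_1\|\sum_{j,k}B_{jik}v_{jk}\|_2$, express the per-group gain as the telescoped norm difference, and use IBP exchangeability (group $j$ last) to reduce the prior to the log-odds term $\log\frac{m_{-ji}}{J-m_{-ji}}$ for existing topics and the $\log(\gamma_0/J)-\log(i-L_{-j})$ term for new ones, then solve the resulting assignment problem with the Hungarian algorithm. Your explicit remark that cycling over $j$ is a monotone block-coordinate ascent (hence converges to a local optimum) is stated in the paper's main text rather than its supplement proof, but it is the same argument.
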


\begin{proof}
First, express the logarithm of the posterior probability given in Eq. \eqref{eq:post_hbp}:
\begin{equation}
\label{eq:logprobproposition2}
\log(\P(B,\theta|v))= \tau_1 \sum_{i,j,k} B_{jik} \langle\theta_i,v_{jk}\rangle + \log\text{IBP}(\{m_i\})
\end{equation}
Given $B$, due to vMF conjugacy, MAP estimate of $\theta_i$ is:
\begin{equation}
    \theta_i = \frac{\sum_{j,i,k}B_{jik}v_{jk}}{\|\sum_{j,i,k}B_{jik}v_{jk}\|_2}
\end{equation}
Plugging this into the first part of Eq. \eqref{eq:logprobproposition2}:
\begin{equation}
\begin{split}
    & \tau_1 \sum_{i,j,k} B_{jik} \langle\frac{\sum_{j,k}B_{jik}v_{jk}}{\|\sum_{j,k}B_{jik}v_{jk}\|_2},v_{jk}\rangle = \\
    = & \tau_1 \sum_i\|\sum_{j,k}B_{jik}v_{jk}\|_2 
    = \tau_1 \sum_i\|\sum_k B_{jik}v_{jk} + \sum_{-j,k}B_{-jik}v_{-jk}\|_2.
    \end{split}
\end{equation}
Consider the above objective w.r.t. $\{B_{jik}\}_{i,k}$, i.e. for some fixed $j$, and note that $\sum_k B_{jik} \in \{0,1\}$. Then equivalent objective function is:
\begin{equation}
\label{eq:j_objective_param}
    \tau_1 \sum_{i,k}B_{jik}\|v_{jk} + \sum_{-j,k}B_{-jik}v_{-jk}\|_2 - 
     \tau_1 \sum_{i,k}B_{jik}\|\sum_{-j,k}B_{-jik}v_{-jk}\|_2.
\end{equation}
Now consider second term of Eq. \eqref{eq:logprobproposition2}:
\begin{equation}
\log\text{IBP}(\{m_i\}) = \log\P(\{\sum_k B_{jik}\}_i|\{m_{-ji}\}) + \log\text{IBP}(\{m_{-ji}\}),
\end{equation}
where $m_{-ji}$ is the number of topics assigned to global topic $i$ outside of group $j$. Due to exchangeability of the IBP, group $j$ can always be considered last and since we optimize for a fixed $j$ given the rest, we can ignore $\log\text{IBP}(\{m_{-ji}\})$.
\begin{equation}
\label{eq:j_objective_nonparam}
\begin{split}
& \log\P(\{\sum_k B_{jik}\}_i|\{m_{-ji}\}) = \\
& = \sum_{i=1}^{L_{-j}}\sum_{k=1}^{K_j} B_{jik}\log\frac{m_{-ji}}{J - m_{-ji}} + \sum_{i=L_{-j} + 1}^{L_{-j} + K_j}\sum_{k=1}^{K_j} B_{jik}\left(\log\frac{\gamma_0}{J} - \log(i-L_{-j})\right).
\end{split}
\end{equation}
Finally observe that when $i>L_{-j}$, Eq. \eqref{eq:j_objective_param} reduces to $\tau_1$. Combining this observation, Eq. \eqref{eq:j_objective_param} and Eq. \eqref{eq:j_objective_nonparam} we arrive at the desired cost formulation.
\end{proof}

\section{Algorithms description}
\label{supp:algos}

\paragraph{Streaming Dynamic Matching.} SDM algorithm is described in Section \ref{subsec:dynamic} of the main text. Obtaining topic estimates with CoSAC \citep{yurochkin2017conic} for different time steps can be performed in parallel if data is available in advance as it was done in the Wiki experiments for SDM with 20 cores.

\paragraph{Distributed Matching.} Using CoSAC, for groups $j=1,\ldots,J$, we obtain topics $\{\beta_{jk} \in \Delta^{V-1}\}_{k=1}^{K_j}$ from $\{x_{jm} \in \mathbb{N}^V\}_{m=1}^{M_j}$, collection of $M_j$ documents of group $j$. The above step can trivially be done in parallel. Estimated topics are then transformed to $\{v_{jk} \in \mathbb{S}^{V-2}\}_{k=1}^{K_j}$ using Lemma \ref{lem:simplex_to_sphere} and reference point $C = \sum_{j=1}^J \sum_{m=1}^{M_j} \frac{x_{jm}}{N_{jm}}/\sum_{j=1}^J M_j$,
where $N_{jm}$ is the number of words in the corresponding document. This reference point is simply a mean of the normalized documents across groups.
Finally we compute MAP estimates of global topics by iterating over groups and updating assignments based on Proposition \ref{distributed}. Distributed Matching (DM) is summarized in Algorithm \ref{algo:DM}. For initializing the algorithm we can use SDM over random sequence of groups.

\begin{algorithm}[ht]
\caption{Distributed Matching (DM)}
\label{algo:DM}
\begin{algorithmic}[1]
\FOR{$j=1,\ldots,J$ (in parallel)}
\STATE Estimate topics $\{\beta_{jk}\}_{k=1}^{K_j}$ = CoSAC($\{x_{jm}\}_{m=1}^{M_j}$)
\STATE Map topics to sphere $\{v_{jk}\}_{k=1}^{K_j}$ (Lemma \ref{lem:simplex_to_sphere})
\ENDFOR
\REPEAT
\STATE select random group index $j$
\STATE Given $\{v_{jk}\}_{jk}$ update $(((B_{jik})))$ for group $j$ using Proposition \ref{distributed}
\UNTIL{convergence}
\STATE Obtain MAP estimates of $\{\theta_i\}_i$ given $(((B_{jik})))$
\end{algorithmic}
\end{algorithm}

\paragraph{Streaming Dynamic Distributed Matching.} SDDM is a synthesis of our SDM and DM algorithms. At time $t$, using CoSAC, for groups $j=1,\ldots,J$, we obtain topics $\{\beta_{jk}^{(t)} \in \Delta^{V-1}\}_{k=1}^{K_j^{(t)}}$ from $\{x_{jm}^{(t)} \in \mathbb{N}^V\}_{m=1}^{M_j^{(t)}}$, collection of $M_j^{(t)}$ documents of group $j$ at time $t$. The above step is done in parallel. Estimated topics are then transformed to $\{v_{jk}^{(t)} \in \mathbb{S}^{V-2}\}_{k=1}^{K_j^{(t)}}$ using Lemma \ref{lem:simplex_to_sphere} and reference point $C_t = \sum_{a=1}^t\sum_{j=1}^J \sum_{m=1}^{M_j^{(a)}} \frac{x_{jm}^{(a)}}{N_{jm}^{(a)}}/\sum_{a=1}^t\sum_{j=1}^J M_j^{(a)}$,
where $N_{jm}^{(a)}$ is the number of words in the corresponding document.
Then we update estimates of global topics dynamics based on the results of Section \ref{subsec:combined} of the main text. Streaming Dynamic Distributed Matching (SDDM) is summarized in Algorithm \ref{algo:SDDM}.

\begin{algorithm}[ht]
\caption{SDDM}
\label{algo:SDDM}
\begin{algorithmic}[1]
\FOR{$t=1,\ldots,T$}
\FOR{$j=1,\ldots,J$ (in parallel)}
\STATE Estimate topics $\{\beta_{jk}^{(t)}\}_{k=1}^{K_j^{(t)}}$ = CoSAC($\{x_{jm}^{(t)}\}_{m=1}^{M_j^{(t)}}$)
\STATE Map topics to sphere $\{v_{jk}^{(t)}\}_{k=1}^{K_j^{(t)}}$ (Lemma \ref{lem:simplex_to_sphere})
\ENDFOR
\REPEAT
\STATE select random group index $j$
\STATE Given $\{v_{jk}^{(t)}\}_{jk}$ and $\{\theta_i^{(t-1)}\}_i$ update $(((B_{jik}^{(t)})))$ for group $j$ using cost defined in Section \ref{subsec:combined} of the main text 
\UNTIL{convergence}
\STATE Obtain MAP estimates of $\{\theta_i^{(t)}\}_i$ given $(((B_{jik}^{(t)})))$ and $\{\theta_i^{(t-1)}\}_i$
\ENDFOR
\end{algorithmic}
\end{algorithm}

\paragraph{Remark} Each time we apply Hungarian algorithm, we can at most discover $K$ topics, where $K$ is the number of local topics used for constructing the cost. It is possible to control the growth of the number of topics by setting a saturation value. When number of global topics exceeds saturation value, we can allow only limited amount of topics to be added each time we apply Hungarian algorithm by truncating cost to $i\leq L + c$ (instead of $i \leq L + K$), where $L$ is the number of global topics and $c$ is the maximum number of topics that can be added when $L$ exceeds the saturation value. When exploring parameter sensitivity we set saturation value to 250 and $c=1$. Our algorithms remain nonparametric, however this helps to avoid excessively large number of global topics for extreme cases of parameter values.

Another empirical modification we found useful, especially on the EJC data with large number of groups, is to truncate popularity counts $m$ at some value (we used 10) when constructing the cost. This helps to prevent extreme rich-get-richer behavior, i.e. when $\log\frac{m_i}{J-m_i}$ becomes too large for some popular topic $i$ and this topic continues to be assigned to even when there is potentially a better match.

\section{Experiments details}
\label{supp:exp_details}

Here we provide some additional details and hyperparameter settings for the experiments. The Dynamic Topic Model \citep{blei2006dynamic} was trained using code from 
\url{https://github.com/blei-lab/dtm}
with default parameter settings and $K=100$. In 56 hours we were able to complete LDA initialization and two EM iterations of the dynamic updates. Streaming Variational Bayes \citep{broderick2013streaming} was trained based on the code from \url{https://github.com/tbroderick/streaming_vb} with default parameters, $K=100$, $\eta=0.01$, batch size of 2048 and 4 asynchronous batches per evaluation. For CoSAC \citep{yurochkin2017conic}, we used code from \url{https://github.com/moonfolk/Geometric-Topic-Modeling} with default parameters and $\omega=0.7$. Same setting of CoSAC was used for learning topics on batches in our framework.

To compare perplexity of all algorithms we utilized geometric approach for computing document topic proportions \emph{given} topics from \citet{yurochkin2016geometric}. Code for this procedure is available at \url{https://github.com/moonfolk/Geometric-Topic-Modeling}.

\section{Extended sensitivity results}
\label{supp:sensitivity}

We present additional results regarding sensitivity of hyperparameters of SDM in Fig. \ref{fig:supp:sdm_sensitivity}. It is interesting to see the impact of large $\tau_0$. This parameter influences the variability of global topics across time and when this variability is small, global topics are reluctant to changes, hence more global topics needed to fit the local topics. We also note that smaller $\tau_0$ results in better perplexity on the EJC corpora, while having little effect on the perplexity of the Wiki dataset. EJC topics appear to be changing at a faster rate, while Wiki topics are relatively more stable.

\begin{figure}
  \begin{subfigure}[b]{0.24\textwidth}
  \centering
  \includegraphics[scale=0.15]{imgs/ejc_sdm_perplexity_gamma_1.pdf}
  \caption{EJC perplexity}
  \label{fig:supp:ejc_sdm_perplexity}
  \end{subfigure}
  \begin{subfigure}[b]{0.24\textwidth}
  \centering
  \includegraphics[scale=0.15]{imgs/ejc_sdm_topic_gamma_1.pdf}
  \caption{\# of topics in EJC}
  \label{fig:supp:ejc_sdm_topic}
  \end{subfigure}
  \centering 
  \begin{subfigure}[b]{0.24\textwidth}
  \centering
  \includegraphics[scale=0.15]{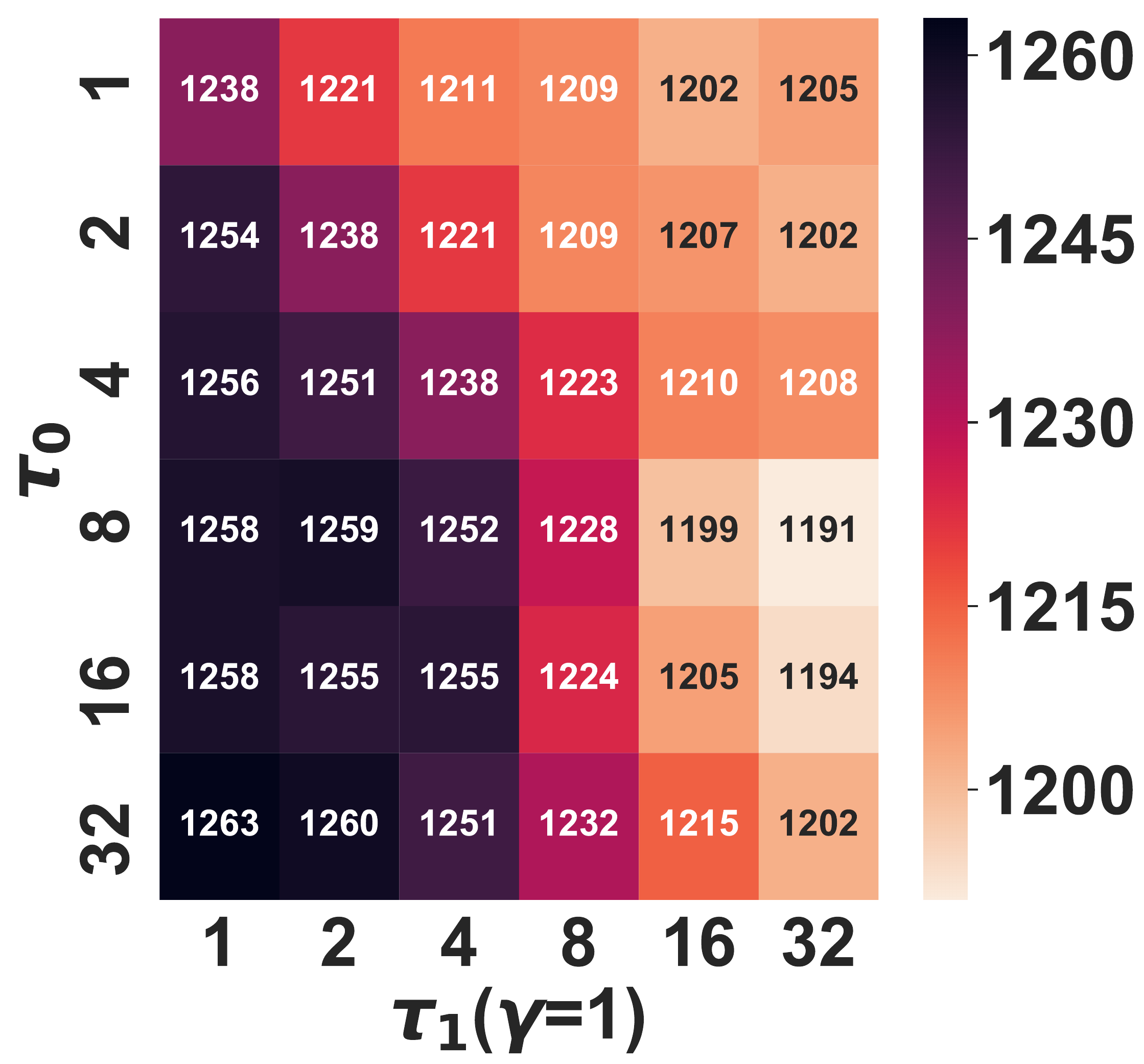}
  \caption{Wiki perplexity}
  \label{fig:supp:wiki_sdm_perplexity}
  \end{subfigure}
  \begin{subfigure}[b]{0.24\textwidth}
  \centering
  \includegraphics[scale=0.15]{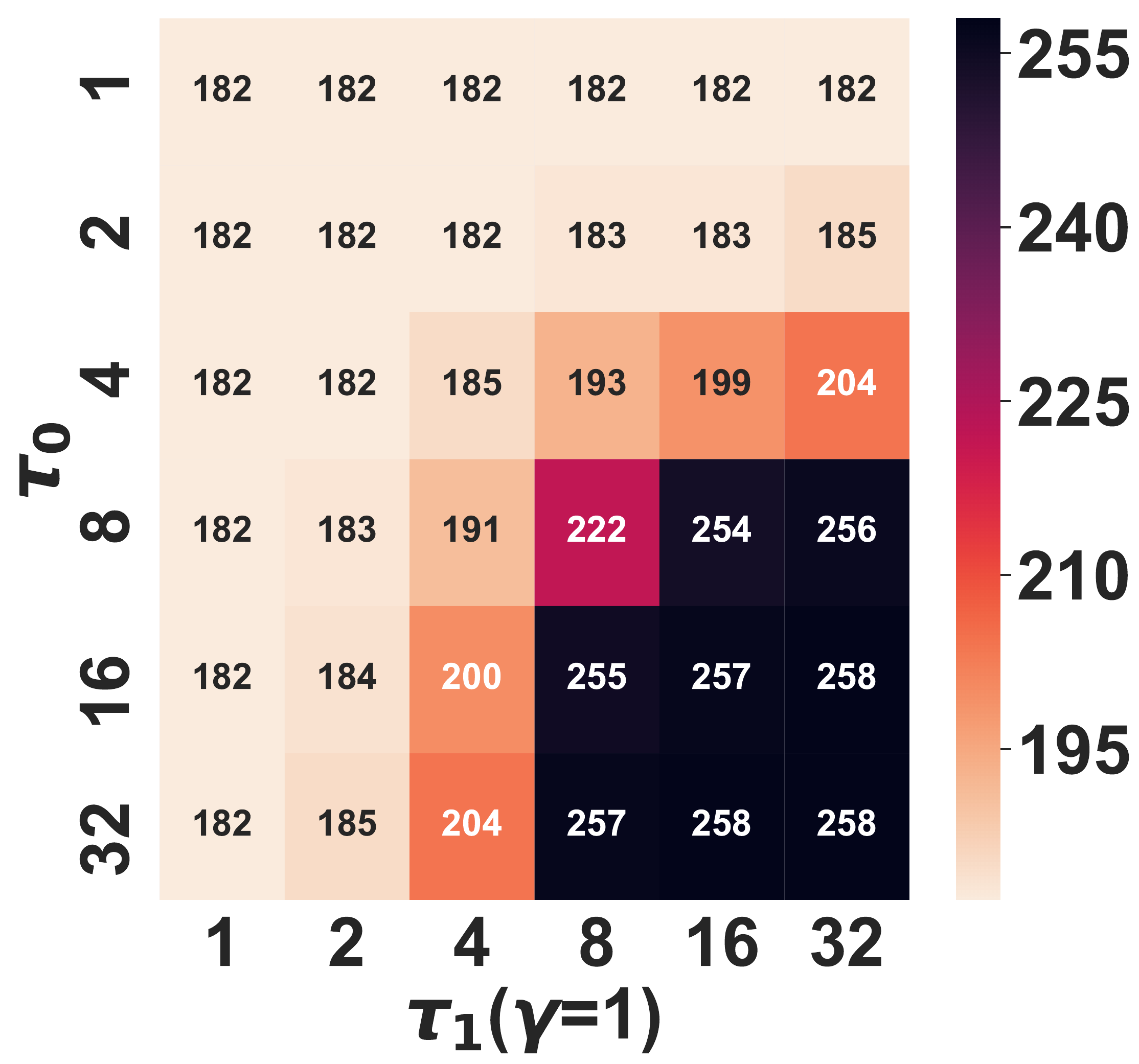}
  \caption{\# of topics in Wiki}
  \label{fig:wiki_sdm_topic}
  \end{subfigure}
  \vskip -0.05in
  \caption{Perplexity and number of topics of SDM}
  \label{fig:supp:sdm_sensitivity}
  \vskip -0.15in
\end{figure}

\section{Relative comparison to other methods}
\label{supp:relative_timing}

For the Wiki corpus we considered Fast DTM \citep{bhadury2016scaling}, but the implementation available online appeared not efficient enough (multicore implementation of Fast DTM is not published). We report a relative comparison to our results based on the best run-time reported in their work. We also note that approach of \citet{bhadury2016scaling} is not suitable for streaming since computations are parallelized across time slices. Additionally, Streaming Variational Bayes (SVB) \citep{broderick2013streaming} appeared quite slow on our computing cluster, therefore we consider it for the relative comparison as well. In Table \ref{table:time} we compare best run-times reported in the respective papers with the run-time of SDDM. Our method can be seen to be significantly faster. 
\begin{table}[h]
\caption{Running time comparison}
\centering
\begin{tabular}{lrrr}
\toprule
{} \hspace{30pt} Data size \hspace{4pt} Training Time \hspace{4pt} Cores used \hspace{4pt}\\
\midrule
SDDM  \hspace{24pt}3mil \hspace{26pt}20min \hspace{35pt} 20 \\
SVB \hspace{27pt}3.6mil \hspace{20pt}125min    \hspace{35pt} 32 \\
Fast DTM \hspace{5pt}2.6mil \hspace{23pt} 28min \hspace{35pt}  58 \\
\bottomrule
\end{tabular}
\label{table:time}
\end{table}

\section{Datasets}
\label{supp:data}

\subsection{Early Journal Content}
\label{supp:data_ejc}

For the EJC dataset we get from 
\url{http://www.jstor.org}, 
the data is well-structured and the preprocessed 1-gram format is available along with the corresponding meta data in xml format for each document. We performed stemming on every word shown in the dataset and removed all stop words given in ENGLISH\_STOP\_WORDS 
from \url{sklearn.feature\_extraction.stop\_words} 
and \url{nltk.corpus.stopwords.words}. Words the length of which are shorter than $3$ are removed. We also removed those words that appear in more than $99\%$ of the documents and those appearing in less than $1\%$ of the documents. For documents,
we removed those \textit{outliers}, in which a same word appears more than $200$ times. Those documents which contain less than $100$ words (considering only words in the preprocessed vocabulary) are also excluded. After preprocessing, there are $4516$ words in the vocabulary and approximately $400k$ documents left. We batch the documents based on both of their publication years and the journals they are published in.

\subsection{Wikipedia data}
\label{supp:data_wiki}

For Wikipedia data, we need four main components: \textit{the vocabulary}, \textit{the original Wikipedia page texts}, \textit{the page view counts} (for each Wikipedia page being considered), and the \textit{category-title mapping}. The data acquisition and processing for each component is described separately as follows.

\paragraph{The vocabulary}
We take the vocabulary from 
Wiktionary:Frequency\_lists 
Originally there are 10000 words. We removed words shorter than 3 characters and removed all stop words given in ENGLISH\_STOP\_WORDS from \url{sklearn.feature\_extraction.stop\_words} 
and 
\url{nltk.corpus.stopwords.words}
After preprocessing, there are 7359 words left in the vocabulary.

\paragraph{The original Wikipedia page texts}
We downloaded the Wiki data dumps (2017/08/20)
from \url{https://meta.wikimedia.org/wiki/Data\_dump\_torrents\#English\_Wikipedia} 
containing about 9 million Wikipedia pages after decompression. We split the whole file into multiple text files in which each individual file contains the content of a single Wikipedia page. We then use \textit{MeTA}, a modern C++ data sciences toolkit to transform all of these raw texts files into 1-gram format using the preprocessed vocabulary.

\paragraph{The page view counts}
We use the Pageview API provided by AQS (Analytics Query Service) to get the page view count information of each Wikipedia page by specifying the time period of interest (year 2017) and the granularity (monthly). The API will return the page view count information as a json file in which the page view count is given for every month in the year 2017. 

\paragraph{The category-title mapping}
Wikipedia pages are categorized in a structure called \textit{category tree}. There are 22 \textit{top-level} categories (e.g., Arts, Culture, Events, etc). Under each category, there could be relevant Wikipedia pages and subcategories, and each subcategory could also contain another set of subcategories and corresponding pages. Unlike the name suggests, \textit{category tree} is \textit{cyclic} and is in fact not tree-structured: each category could be under multiple categories and each page could belong to multiple categories/subcategories and there could be \textit{cycles}. Thus trying to traverse the whole tree to get articles under a certain category is infeasible. Considering all the categories/subcategories in the category tree could also be distractive. Thus, we only focus on the \textit{top-level} categories and exclude the category \textit{Reference works} since it is of little interest. We use \textit{wptools} to traverse the category tree for each of the top-level categories of interest \textit{individually} by specifying the category name during the traversal. We store the mapping information between each category and the corresponding Wikipage titles for later processing. 

\paragraph{Components aggregation}
We first perform \textit{intersection} over the titles of Wikipedia pages extracted from the original Wikipedia page texts, page view counts and category-title mapping, and drop those Wikipedia pages the texts in which contain less than $10$ words (the total word count) from the vocabulary. Since there is a severe inconsistency in titles extracted from all the three components, performing intersection results in loss of a large portion of Wikipedia pages. After dropping all unqualified Wikipedia pages, we have approximately 500k remaining Wikipedia pages. We assign each Wikipedia page a timestamp based on its page view counts across the 12 months in 2017, the month in which the page gets most view counts is assigned to the page as its timestamp.  Since our model naturally considers data in \textit{two-level} batches (for example, we batch the Wikipedia pages based on time and category), and there are overlapping Wikipedia pages among different batches (one Wikipedia page could belong to multiple categories), we have about three million Wikipedia pages in the final dataset incarnation across all the batches.

\end{document}